\newtheorem{lemma}{Lemma}
\newtheorem{definition}{Definition}[section]
\newtheorem{example}{Example}[section]
\begin{document}
\title{QoS aware Automatic Web Service Composition with Multiple objectives}

\author{Soumi~Chattopadhyay,~\IEEEmembership{Student Member,~IEEE,} and
        Ansuman~Banerjee,~\IEEEmembership{Member,~IEEE}

\IEEEcompsocitemizethanks{\IEEEcompsocthanksitem Authors with Indian Statistical Institute, Email: $ansuman@isical.ac.in$}}

\makeatletter
\long\def\@IEEEtitleabstractindextextbox#1{\parbox{0.922\textwidth}{#1}}
\makeatother

\IEEEtitleabstractindextext{
\begin{abstract}
With increasing number of web services, providing an end-to-end Quality of Service 
(QoS) guarantee in responding to user queries is becoming an important concern. Multiple QoS parameters (e.g., response time, latency, throughput, 
reliability, availability, success rate) are associated with a service, thereby, service composition with a 
large number of candidate services is a challenging multi-objective optimization problem. In this paper, we study 
the multi-constrained multi-objective QoS aware web service composition problem and propose three different approaches to solve the same, one optimal, based on Pareto front construction and two other based on heuristically traversing the solution space.
We compare the performance of the heuristics against the optimal, and show the effectiveness 
of our proposals over other classical approaches for the same problem setting, with experiments on WSC-2009 and ICEBE-2005 datasets.
\end{abstract}

\begin{IEEEkeywords}
Service Composition, Quality of Service (QoS), Multi-objective, Pareto optimal
\end{IEEEkeywords}}

\maketitle

\IEEEdisplaynontitleabstractindextext
\IEEEpeerreviewmaketitle
\ifCLASSOPTIONcaptionsoff
  \newpage
\fi

\IEEEraisesectionheading{\section{Introduction}\label{sec:intro}}
\noindent
In recent times, web services have become ubiquitous with the proliferation of Internet usage. A web service is a software component, that takes a set of inputs, performs a specific task and produces a set of outputs. A set of non-functional quality of service (QoS) parameters (e.g., response time, throughput, reliability etc.) are associated with a web service. These QoS parameters 
determine the performance of a service. Sometimes, a single web service falls short to respond to a user query. Therefore, service composition~\cite{el2010tqos,oh2008effective} is required. During service composition, multiple services are combined in a specific order based on their input-output dependencies to produce a desired set of outputs. While providing a solution in response to a query, it is also necessary to ensure fulfillment of end-to-end QoS requirements~\cite{Zeng:2003:QDW:775152.775211}, which is the main challenge in QoS aware service composition \cite{7226855,bartalos2012automatic,peer2005web}. 

A large body of literature in service composition deals with optimization of a single QoS parameter \cite{rodriguez2015hybrid,6009375}, especially, response time or throughput. However, a service may have multiple QoS parameters; therefore, the service composition problem turns out to be a multi-objective optimization problem. Though optimality of the end solution is the primary concern in multi-objective service composition, computing the optimal solution is time consuming. This has led to another popular research theme around multi-constrained service composition \cite{Alrifai:2012:HAE:2180861.2180864,1357986}, where a constraint is specified on each QoS parameter and the objective is to satisfy all the QoS constraints in the best possible way. 


Two different models have been considered in service composition literature, namely, workflow based model (WM) \cite{guidara2015heuristic,qi2010combining} and input-output dependency based model (IOM) \cite{rodriguez2015hybrid}. The salient features of both the models are discussed in Table \ref{tab:features}. 
Most of the research proposals on multiple-QoS aware service composition have considered WM \cite{ba2016exact,zhang2013selecting}. In general, the methods proposed in WM cannot solve the problem in IOM, since in IOM, in addition to the QoS values of a service, the input-output dependencies between the service also need to be considered. Approaches \cite{feng2007model,hu2005quality}, that consider IOM typically transform the multiple objectives into a single objective and generate the optimal solution instead of the Pareto optimal solutions \cite{6985731}. A weighted sum is used to convert multiple objectives into a single objective. However, finding the weights is a challenging task. 

\begin{table*}[!ht]
\scriptsize
\caption{Salient features of WM and IOM}
\centering
\begin{tabular}{l|l|l}
 \hline
 \multicolumn{1}{c|}{Features} & \multicolumn{1}{c|}{WM} & \multicolumn{1}{c}{IOM}\\
 \hline
 Query specification & A Workflow: a set of tasks to be performed in a specific order & A set of given query inputs and a set of desired query outputs \\
 \hline
 Query objective & To serve the query by selecting a service for each task so that & To serve the query by identifying a set of services that are directly \\
               & the overall QoS values are optimized, where the service & (by the query inputs) or indirectly (by the outputs of the services that \\
               & repository contains a set of functionally equivalent services & are directly or indirectly activated by the query inputs) activated by the\\
               & for each task                                 & query inputs and can produce the query outputs\\
 \hline
 Search Space & $m^n$, $n=$ the number of tasks and $m=$ the number of functionally  & $2^k$ (in the worst case), $k=$ the total number of services that can be \\
              & equivalent services for each task; Total number of services & activated by the query inputs\\
              & that can participate to serve the query $k= m \times n$ & \\
 \hline
 Complexity & For a single QoS parameter, finding the optimal solution is a & Though each of response time and throughput, when treated as an individual \\ 
            & polynomial time algorithm \cite{abu2015complexity}. For multiple QoS parameters, & parameter can be optimized in polynomial time, some of the other parameters \\
            & finding the Pareto optimal solutions is NP-hard \cite{Alrifai2010} & (e.g., reliability, price, availability) require exponential time procedures, even\\
            & & when a single parameter optimization is considered. Multiple parameters  \\
            &&and their simultaneous optimization turns out to be a hard problem \cite{yan2012anytime} \\
 \hline
 \end{tabular}\label{tab:features}
\end{table*}

In this paper, we study the multi-objective QoS-aware web service composition problem in IOM. To the best of our knowledge, there is not much work in IOM considering multiple QoS aware service composition based on Pareto front construction. However, considering the parameters individually instead of a weighted sum combination, has a major significance, since it can deal with the users having various QoS preferences. Additionally, we have considered multiple local and global constraints on different QoS parameters. In this paper, our major contributions are as follows:

\noindent
 ~\textbullet~ We first propose an optimal algorithm, that constructs the Pareto optimal solution frontier satisfying all QoS constraints for the multi objective problem in IOM. We theoretically prove the soundness and completeness of our algorithm. 

\noindent
~\textbullet~ Additionally, we propose two heuristics. The first one employs a beam search strategy \cite{russell1995modern}, while the other is based on non deterministic sorting genetic algorithm (NSGA) \cite{deb2002fast}. 
 
\noindent
~\textbullet~ To demonstrate the time-quality trade-off, we perform extensive experiments on the benchmarks ICEBE-2005 \cite{icebe2005} and WSC-2009 \cite{bansal2009wsc}. Additionally, we compare our proposed methods with \cite{yan2015anytime}, which proposes the composition problem in IOM using a single objective weight transformation.

\noindent
This paper is organized as follows. In Section \ref{sec:related}, we compare and contrast our model and proposed approaches with the existing literature. Section \ref{sec:preliminaries} presents some background, the next outlines our problem. Section \ref{sec:method}-\ref{sec:ga} present our proposal, Section \ref{sec:result} presents results. Section~\ref{sec:conclusion} concludes the work.

\section{Related Work}\label{sec:related}
\noindent
Automatic service composition \cite{7027801,hwang2008dynamic} is a fundamental problem in services computing. A significant body of research has been carried out on QoS-aware service composition considering a single QoS parameter, especially, response time and throughput \cite{chattopadhyay2015scalable,xia2013web,chen2014qos}. Multiple-QoS aware service composition has been discussed in \cite{mostafa2015multi}. We first discuss related work regarding the models followed by the solution approaches for multiple QoS aware web service composition.

\subsection{Problem Models}\label{subsec:problemmodel}
\noindent
The two most popular models considered in literature are the workflow model (WM) and input-output dependency based model (IOM). The salient features of these two models are discussed in Table \ref{tab:features}.
In WM, it is assumed that a task can be accomplished by a single web service. However, in practice, it may not be the case always. Some times more than one service may be required to perform a particular task. Therefore, the input-output dependency based model becomes popular.


It may be noted, existing solution approaches for WM are unable to solve the composition problem in IOM. This is mainly because of the following reasons:
\begin{itemize}
 \item In WM, a workflow is provided as an input, whereas, in IOM, no workflow is provided, rather the aim of IOM is to find out a flow of services to serve the query so that the overall QoS values are optimized.
 \item In general, while selecting a service in WM, only the QoS values of the service need to be taken care of. In contrast to the former case, while selecting a service in IOM, not only the QoS values of the services need to be considered but also its input-output dependencies on the other services need to be taken into account.
 \item In WM, the number of services does not vary across all solutions, while, in IOM, the number of services varies across the solutions to a query.
\end{itemize}
However, methods that can solve the composition problem in IOM can solve the composition problem in WM. Moreover, the search space of WM is a subset of the search space of IOM. We now discuss different solution models.

\subsection{Solution Models and Approaches}\label{subsec:solutionmodel}
\noindent
We classify below the different solution models and discuss the approaches existing in literature.

\noindent
{\bf{Scalarization (SOO)}}: 
To deal with multiple QoS aware service composition, \cite{qi2010combining,wagner2011qos,DBLP:journals/tweb/ChattopadhyayBB17} have resorted to scalarization techniques to convert multiple objectives into a single objective using the weighted sum method. In \cite{yan2015anytime}, the authors have proposed a planning graph based approach and an anytime algorithm that attempts to maximize the utility in IOM. A scalarization technique in WM was proposed in \cite{qi2010combining}. Though scalarization techniques are simple and easy to implement, however, some information may be lost due to the transformation from multiple objectives to a single objective. Moreover, finding the weights of the parameters is difficult. User preferences are required to decide the weights of the parameters, which is not always easy to identify. Even though the preferences of the parameters are obtained, it is not easy to quantify the preferences to find the weights of the parameters, which has a great impact on finding the optimal solution. 

%

\noindent
{\bf{Single-objective multi-constrained optimization (SOMCO)}}: To overcome the shortcomings of scalarization techniques, researchers have looked at another popular approach, namely, single-objective multi-constrained optimization \cite{Alrifai2010,alrifai2009combining,DBLPChattopadhyayB16}. In this approach, one parameter is selected as the primary parameter to be optimized, while for the rest of the parameters, a worst case bound is set (often termed as constraints). For example, in \cite{cao2007service}, the authors analyzed the relation between multi-objective service composition and the Multi-choice, Multi-dimension 0-1 Knapsack Problem (MMKP) in WM and used the weighted sum approach to compute the utility function. The objective of \cite{cao2007service} is to maximize the total utility while satisfying different QoS constraints. In \cite{Alrifai2010} and \cite{alrifai2009combining}, authors proposed a multi constrained QoS aware service composition approach, instead of finding the optimal solutions in WM. In \cite{Alrifai2010}, an Integer Linear Programming (ILP) based approach was proposed, where ILP is used to divide the global constraints into a set of local constraints and then using the local constraints, the service selection is done for each task in WM. In \cite{schuller2012cost,Zeng:2003:QDW:775152.775211}, ILP based methods are used to solve multi-constrained service composition in WM. Dynamic binding is the main concern of \cite{alrifai2009combining}, where authors proposed to generate the skyline services for each task in WM and cluster the services using the K-means algorithm. In \cite{DBLPChattopadhyayB16}, an ILP based multi-constrained service composition was proposed in IOM.   
In this class of methods as well, selecting the primary parameter to optimize (and rest to put constraints on) is a challenging problem and often depends on user preferences. Moreover, determining the constraint values is not an easy task and may this often lead to no solution being generated (i.e., no solution exists to satisfy all the constraints).

\noindent
{\bf{Pareto optimal front construction (POFC)}}: To address the above challenges, another research approach based on constructing the Pareto optimal frontier has been proposed. A Pareto front consists of the set of solutions where each solution is either same or better in at-least one QoS value than rest of the solutions belonging to the Pareto front. This approach does not require identifying the user preferences of the QoS parameters. Therefore, this approach can easily deal with users having different preferences. To the best of our knowledge, most of the work based on Pareto front construction \cite{schuller2012cost} focus on WM. For example, in \cite{6985731}, the authors proposed to generate the Pareto optimal solutions in a parallel setting. In \cite{trummer2014multi}, the authors proposed a fully polynomial time approximation method to solve the problem. A significant amount of work has been done based on evolutionary algorithms \cite{cremene2016comparative,li2010applying}, such as Particle Swarm Optimization \cite{liao2013multi}, Ant Colony Optimization \cite{shanshan2012improved}, Bee Colony Optimization \cite{liu2014parameter}, Genetic Algorithms \cite{zhang2013genetic,wang2008optimal}, NSGA2 \cite{hashmi2013automated,wagner2012multi}. In this paper, we consider the Pareto front construction model on IOM.

Table \ref{tab:relatedModel} summarizes the state-of-the-art methods considering different models that have been discussed above.

\begin{table}[!ht]
\scriptsize
\caption{State of the Art regarding Models}
\centering
\begin{tabular}{c|c}
 \hline
  Models & Methods \\
 \hline
 WM-SOO & \cite{qi2010combining}\\\hline
 WM-SOMCO & \cite{Alrifai2010,alrifai2009combining,cao2007service,Zeng:2003:QDW:775152.775211,zhang2013genetic,wang2008optimal,liu2014parameter,liao2013multi,shanshan2012improved,schuller2012cost} \\\hline
 WM-POFC & \cite{6985731,mostafa2015multi,cremene2016comparative,hashmi2013automated,wagner2012multi,trummer2014multi,li2010applying} \\\hline
 IOM.SOO & \cite{yan2015anytime,wagner2011qos,DBLP:journals/tweb/ChattopadhyayBB17} \\\hline
 IOM-SOMCO & \cite{DBLPChattopadhyayB16} \\\hline
 IOM-POFC & - \\
 \hline
\end{tabular}\label{tab:relatedModel}
\end{table}




\subsection{Novelty of our work and contributions}
\noindent
In contrast to the above, we consider this problem in IOM-POFC setting. In addition, we have considered local and global constraints on QoS parameters. In Section \ref{sec:preliminaries}, we formally describe our model.

The search space of the composition problem addressed in this paper is exponential as discussed earlier. Therefore, we first try to reduce the search space of our algorithm using clustering as demonstrated in \cite{7933195,wagner2011qos}. On the reduced search space, we propose an optimal algorithm using a graph based method. In literature, the graph based methods \cite{DBLP:journals/tweb/ChattopadhyayBB17,xia2013web,chen2014qos} are mainly applied either to solve the service composition problem for single parameter optimization or to solve the multiple QoS aware problem using scalarization. In this paper, we apply the graph based approach to construct a Pareto optimal solution frontier. The optimal algorithm is an exponential time procedure and often does not scale for large scale composition. Therefore, we further propose two heuristic algorithms. 

Our first heuristic algorithm is based on beam search technique. Beam search technique is applied in \cite{yan2015anytime} to solve the multiple QoS aware problem using scalarization. Here, we use the beam search technique to find Pareto optimal solutions. Since our algorithm is a heuristic approach, it does not generate the optimal solutions. However, we have shown that the solution quality monotonically improves with increase in the size of the beam width. 

Our second heuristic algorithm is based on NSGA. Though multiple evolutionary algorithms exist in literature \cite{zhang2013genetic,wang2008optimal,hashmi2013automated,wagner2012multi} to solve multiple QoS aware optimization, however, all these methods, to the best of our knowledge, solve the problem in WM. We use it to find the solutions for IOM. Moreover, in each step of the algorithm based on NSGA, we ensure that the solutions generated by the algorithm is a functionally valid solution. 

\section{Background and Problem Formulation}\label{sec:preliminaries}
\noindent
In this section, we discuss some background concepts for our work.
We begin with a classification of a QoS parameter.

\begin{definition}{\em [\bf Positive / Negative QoS parameter:]}
 A QoS parameter is called a positive (negative) QoS parameter, if a higher (lower) value of the parameter implies better performance.
 \hfill$\blacksquare$
\end{definition}

\noindent
Reliability, availability, throughput are examples of positive QoS parameters, while response time, 
latency are examples of negative QoS parameters. 

Consider two services ${\cal{W}}_i$ and ${\cal{W}}_j$ being compared with respect to a QoS 
parameter ${\cal{P}}_k$. We have the following cases:

\begin{itemize}
 \item ${\cal{W}}_i$ is {\em{better}} than ${\cal{W}}_j$ with respect to ${\cal{P}}_k$ implies,
 \begin{itemize}
  \item If ${\cal{P}}_k$ is a positive QoS, ${\cal{P}}^{(i)}_k > {\cal{P}}^{(j)}_k$, where 
	${\cal{P}}^{(i)}_k$ and ${\cal{P}}^{(j)}_k$ are the respective values of ${\cal{P}}_k$ for ${\cal{W}}_i$ and ${\cal{W}}_j$.
  \item If ${\cal{P}}_k$ is a negative QoS parameter, ${\cal{P}}^{(i)}_k < {\cal{P}}^{(j)}_k$.
 \end{itemize} 
 \item ${\cal{W}}_i$ is {\em{as good as}} ${\cal{W}}_j$ with respect to ${\cal{P}}_k$ implies,
	${\cal{P}}^{(i)}_k = {\cal{P}}^{(j)}_k$, irrespective of whether ${\cal{P}}_k$ is positive or negative.
 \item ${\cal{W}}_i$ is {\em{at least as good as}} ${\cal{W}}_j$ with respect to ${\cal{P}}_k$ 
	implies, either ${\cal{W}}_i$ is {\em{better}} than ${\cal{W}}_j$ or ${\cal{W}}_i$ is {\em{as good as}} 
	${\cal{W}}_j$ with respect to ${\cal{P}}_k$.
\end{itemize}

\noindent
The QoS parameters are further classified into four categories based on the aggregate functions used for composition: maximum, minimum, addition, multiplication. 

A query is specified in terms of a set of input-output parameters. We now present the concept of eventual activation of a web service for a given query. A web service is {\em activated}, when the set of inputs of the service is available in the system.
As an example, consider ${\cal{W}}_1$ in Table \ref{tab:exampleServices}, ${\cal{W}}_1$ is activated when its input $i_1$ is available. A service ${\cal{W}}_i$ is {\em eventually activated} by a set of input parameters $I$, if ${\cal{W}}_i$ is either directly activated by $I$ itself or indirectly activated by the outputs of the set of services that are eventually activated by $I$, as shown in Example~\ref{exm:overview}. In the next subsection, we formally discuss the model considered in this paper and our objective.

\subsection{Problem Formulation}\label{subsec:problemformulation}
\noindent
The service composition problem considered in this paper can be formally described as below:

\begin{itemize}
 \item A set of web services $W = \{{\cal{W}}_1, {\cal{W}}_2, \ldots, {\cal{W}}_n\}$
 \item For each service ${\cal{W}}_i \in W$, a set of inputs ${\cal{W}}_i^{ip}$ and a set of outputs ${\cal{W}}_i^{op}$
 \item A set of QoS parameters ${\cal{P}} = \{{\cal{P}}_1, {\cal{P}}_2, \ldots, {\cal{P}}_m\}$
 \item For each service ${\cal{W}}_i \in W$, a tuple of QoS values 
	${\cal{P}}^{(i)} = ({\cal{P}}^{(i)}_1, {\cal{P}}^{(i)}_2, \ldots, {\cal{P}}^{(i)}_m)$ 
 \item A set of aggregation functions ${\cal{F}} = \{f_1, f_2, \ldots, f_m\}$, where $f_i$ is defined for a QoS parameter
	${\cal{P}}_i \in {\cal{P}}$  
 \item A query ${\cal{Q}}$, specified by a set of inputs ${\cal{Q}}^{ip}$ and a set of requested outputs ${\cal{Q}}^{op}$
 \item Optionally, a set of local QoS constraints ${\cal{LC}} = \{{\cal{LC}}_1, {\cal{LC}}_2, \ldots, {\cal{LC}}_{k_l}\}$ and 
	a set of global QoS constraints ${\cal{GC}} = \{{\cal{GC}}_1, {\cal{GC}}_2, \ldots, {\cal{GC}}_{k_g}\}$
\end{itemize}

\noindent
A constraint denotes a bound on the worst case value of a QoS parameter.
While the {\em{local constraints}} are applicable on a single service (${\cal{LC}}_1$
in Example \ref{exm:overview}), the {\em{global constraints}} are applicable on a composition 
solution, (${\cal{GC}}_1$ in Example \ref{exm:overview}). 

The objective of multi-objective QoS constrained service composition is to serve ${\cal{Q}}$ in a way such that the QoS values are optimized, while ensuring functional dependencies are preserved, and all local and global QoS constraints are satisfied. Since multiple (and often disparate) QoS parameters are involved, this calls for a classical multi-objective optimization, and we address this challenge in this work.  In this paper, we propose an optimal solution construction methodology.  Often, a single solution may not be the best with respect to all the QoS parameters. Therefore, instead of producing a single solution, our method generates a set of Pareto optimal solutions as described in the following sections.

\subsection{Running Example}
\noindent
We now present an illustrative example for our problem. 

\begin{example}\label{exm:overview}

Table \ref{tab:exampleServices} shows a brief description of the services in a service repository,
their inputs, outputs and values of response time (in ms), throughput (number of service invocations per minute) and reliability (in percentage) in the form of a tuple
(RT, T, R).

\begin{table}[!ht]
\tiny
\caption{{Description of example services}}
\centering
\begin{tabular}{c|c|c|c}
 Services & Inputs & Outputs & (RT, T, R)\\
 \hline\hline
 ${\cal{W}}_1$ & $\{i_1\}$ & $\{io_4, io_5\}$ & ${\cal{P}}^{(1)}:$ (500, 7, 93\%)\\
 \hline
 ${\cal{W}}_2$ & $\{i_1\}$ & $\{io_4, io_5\}$ & ${\cal{P}}^{(2)}:$ (600, 13, 69\%)\\
 \hline
 ${\cal{W}}_3$ & $\{i_1\}$ & $\{io_4, io_5\}$ & ${\cal{P}}^{(3)}:$ (350, 4, 97\%)\\
 \hline
 ${\cal{W}}_4$ & $\{i_1\}$ & $\{io_4, io_5\}$ & ${\cal{P}}^{(4)}:$ (475, 3, 85\%)\\
 \hline
 ${\cal{W}}_5$ & $\{i_2\}$ & $\{io_6\}$ & ${\cal{P}}^{(5)}:$ (1300, 15, 81\%)\\
 \hline
 ${\cal{W}}_6$ & $\{i_2\}$ & $\{io_6\}$ & ${\cal{P}}^{(6)}:$ (700, 19, 90\%)\\
 \hline
 ${\cal{W}}_7$ & $\{i_1, i_2\}$ & $\{o_{14}\}$ & ${\cal{P}}^{(7)}:$ (1100, 9, 80\%)\\
 \hline
 ${\cal{W}}_8$ & $\{i_3\}$ & $\{io_4\}$ & ${\cal{P}}^{(8)}:$ (1100, 6, 73\%)\\
 \hline
 ${\cal{W}}_9$ & $\{i_3\}$ & $\{io_4\}$ & ${\cal{P}}^{(9)}:$ (300, 13, 79\%)\\
 \hline
 ${\cal{W}}_{10}$ & $\{i_3\}$ & $\{io_4\}$ & ${\cal{P}}^{(10)}:$ (800, 9, 78\%)\\
 \hline
 ${\cal{W}}_{11}$ & $\{io_4\}$ & $\{io_8, io_9\}$ & ${\cal{P}}^{(11)}:$ (1300, 3, 65\%)\\
 \hline
 ${\cal{W}}_{12}$ & $\{io_4\}$ & $\{io_8, io_9\}$ & ${\cal{P}}^{(12)}:$ (900, 7, 83\%)\\
 \hline
 ${\cal{W}}_{13}$ & $\{io_4\}$ & $\{io_8, io_9\}$ & ${\cal{P}}^{(13)}:$ (400, 9, 93\%)\\
 \hline
 ${\cal{W}}_{14}$ & $\{io_4\}$ & $\{io_8, io_9\}$ & ${\cal{P}}^{(14)}:$ (750, 5, 79\%)\\
 \hline
 ${\cal{W}}_{15}$ & $\{io_5, io_6, io_7\}$ & $\{io_9, io_{10}, io_{11}\}$ & ${\cal{P}}^{(15)}:$ (700, 17, 91\%)\\
 \hline
 ${\cal{W}}_{16}$ & $\{io_5, io_6, io_7\}$ & $\{io_9, io_{10}, io_{11}\}$ & ${\cal{P}}^{(16)}:$ (500, 13, 90\%)\\
 \hline
 ${\cal{W}}_{17}$ & $\{io_8\}$ & $\{o_{12}\}$ & ${\cal{P}}^{(17)}:$ (150, 5, 86\%)\\
 \hline
 ${\cal{W}}_{18}$ & $\{io_8\}$ & $\{o_{12}\}$ & ${\cal{P}}^{(18)}:$ (400, 2, 73\%)\\
 \hline
 ${\cal{W}}_{19}$ & $\{io_8\}$ & $\{o_{12}\}$ & ${\cal{P}}^{(19)}:$ (300, 3, 81\%)\\
 \hline
 ${\cal{W}}_{20}$ & $\{io_9\}$ & $\{o_{13}\}$ & ${\cal{P}}^{(20)}:$ (1500, 12, 94\%)\\
 \hline
 ${\cal{W}}_{21}$ & $\{io_9\}$ & $\{o_{13}\}$ & ${\cal{P}}^{(21)}:$ (900, 14, 97\%)\\
 \hline
 ${\cal{W}}_{22}$ & $\{io_{10}\}$ & $\{o_{12}\}$ & ${\cal{P}}^{(22)}:$ (1700, 14, 87\%)\\
 \hline
 ${\cal{W}}_{23}$ & $\{io_9, io_{10}\}$ & $\{o_{14}\}$ & ${\cal{P}}^{(23)}:$ (1100, 10, 80\%)\\
 \hline
 ${\cal{W}}_{24}$ & $\{io_9, io_{10}\}$ & $\{o_{14}\}$ & ${\cal{P}}^{(24)}:$ (1700, 12, 81\%)\\
 \hline
 ${\cal{W}}_{25}$ & $\{io_{10}\}$ & $\{o_{12}\}$ & ${\cal{P}}^{(25)}:$ (1400, 13, 83\%)\\
 \hline
 ${\cal{W}}_{26}$ & $\{io_{10}\}$ & $\{o_{12}\}$ & ${\cal{P}}^{(26)}:$ (1900, 7, 80\%)\\
 \hline
 ${\cal{W}}_{27}$ & $\{io_{11}\}$ & $\{o_{13}\}$ & ${\cal{P}}^{(27)}:$ (1500, 11, 92\%)\\
 \hline
 ${\cal{W}}_{28}$ & $\{io_{11}\}$ & $\{o_{13}\}$ & ${\cal{P}}^{(28)}:$ (1100, 15, 94\%)\\
 \hline
 ${\cal{W}}_{29}$ & $\{io_{10}, io_{11}\}$ & $\{o_{15}\}$ & ${\cal{P}}^{(29)}:$ (500, 17, 72\%)\\
 \hline
 ${\cal{W}}_{30}$ & $\{io_{10}, io_{11}\}$ & $\{o_{15}\}$ & ${\cal{P}}^{(30)}:$ (350, 12, 74\%)\\
 \hline
\end{tabular}\label{tab:exampleServices}
\end{table}

Consider a query with inputs $i_1, i_2, i_3$ and desired outputs $o_{12}, o_{13}$. The objective is to find a solution to the query in such a way that the values of the QoS parameters are optimized (i.e., minimizing response time, maximizing throughput and reliability). It may be noted, a single solution may not be able to optimize all the QoS parameters. Therefore, multiple solutions need to be generated optimizing different QoS parameters.

The services that are eventually activated by the query inputs are shown in Figure \ref{fig:activation}. The services at $L_1$ of Figure \ref{fig:activation} are directly activated by the query inputs, while the services at $L_2$ and $L_3$ are indirectly activated by the query inputs. Each ellipse represents the input parameters available in the system at a particular point of time. Additionally, we have the following set of constraints.

\begin{itemize}
 \item ${\cal{LC}}_1:$ Each service participating in the solution must have a reliability value greater than $70\%$.
 \item ${\cal{GC}}_1:$ The reliability of the solution must be more than $60\%$.
 \item ${\cal{GC}}_2:$ The response time of the solution must be less than $2.5$s.
\end{itemize}

\begin{figure}[!htb]
\centering
\includegraphics[height=4cm,width=\linewidth]{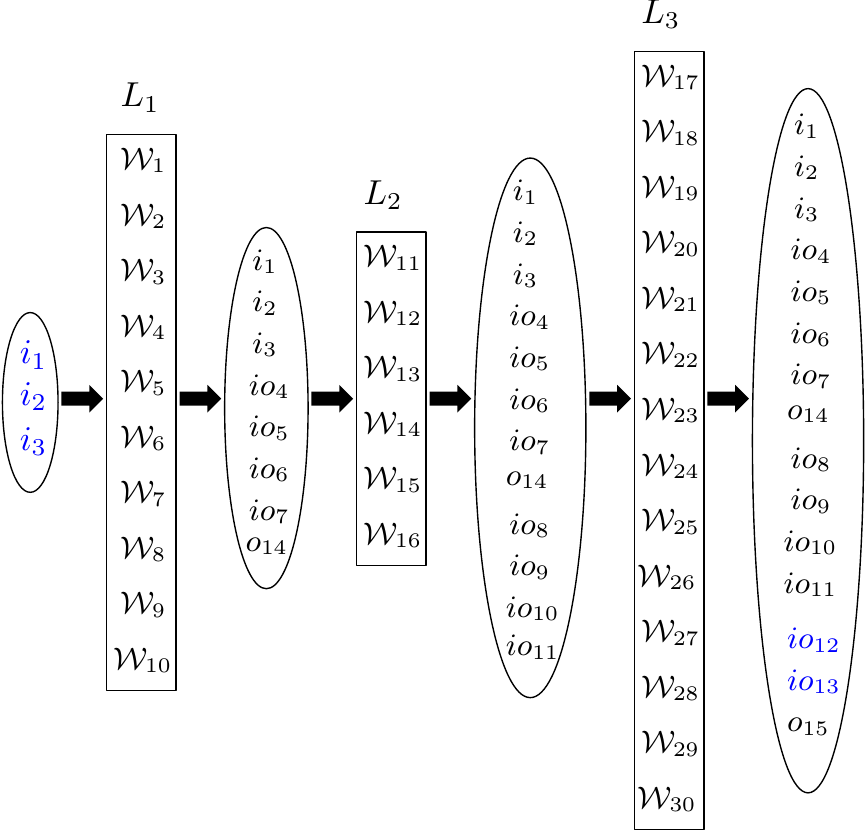}
\caption{Response to the query}
\label{fig:activation}
\end{figure}

\noindent
In this paper, we demonstrate the Pareto optimal solutions construction method given the above scenario using this example.
\hfill$\blacksquare$
\end{example}
\section{Solution Architecture}\label{sec:method}
\noindent
In the following, we first define a few terminologies to build up the foundation of our work.

\begin{definition}{\em [\bf Dominating Service:]}
 A service ${\cal{W}}_i$ with QoS tuple ${\cal{P}}^{(i)} = ({\cal{P}}^{(i)}_1, {\cal{P}}^{(i)}_2, \ldots, {\cal{P}}^{(i)}_m)$ 
 dominates another service ${\cal{W}}_j$ with ${\cal{P}}^{(j)} = ({\cal{P}}^{(j)}_1, {\cal{P}}^{(j)}_2, \ldots, 
 {\cal{P}}^{(j)}_m)$, if $\forall$ $k$, ${\cal{P}}^{(i)}_k$ is at least as good as ${\cal{P}}^{(j)}_k$ and 
 $\exists$ $k$, such that, ${\cal{P}}^{(i)}_k$ is better than ${\cal{P}}^{(j)}_k$. ${\cal{W}}_i$ is 
 called the dominating service and ${\cal{W}}_j$ is dominated.
 \hfill$\blacksquare$
\end{definition}

\begin{example}
 Consider ${\cal{W}}_3$ and ${\cal{W}}_4$ in Table \ref{tab:exampleServices}
 with QoS tuples $(350, 4, 97\%)$ and $(475, 3, 85\%)$ respectively.
 ${\cal{W}}_3$ dominates ${\cal{W}}_4$, since ${\cal{W}}_3$ has a lesser response time,
 higher throughput and reliability as compared to ${\cal{W}}_4$.
 \hfill$\blacksquare$
\end{example}

\begin{definition}{\em [\bf Mutually Non Dominated Services:]}
 Two services ${\cal{W}}_i$ and ${\cal{W}}_j$ are said to be mutually non-dominated, if no one dominates the other, 
 i.e., no service is a dominating service.
 \hfill$\blacksquare$
\end{definition}

\begin{example}
 Consider ${\cal{W}}_1$ and ${\cal{W}}_3$ in Table \ref{tab:exampleServices}
 with QoS tuples $(500, 7, 93\%)$ and $(350, 4, 97\%)$ respectively.
 ${\cal{W}}_1$ and ${\cal{W}}_3$ are mutually non-dominated.
 ${\cal{W}}_1$ has higher throughput than ${\cal{W}}_3$, while ${\cal{W}}_3$ has 
 lower response time and higher reliability.
 \hfill$\blacksquare$
\end{example}

\begin{definition}{\em [\bf Skyline Service Set:]}
 Given a set of services ${\cal{WS}}$, the skyline service set ${\cal{WS}}^{*}$ is a subset of ${\cal{WS}}$ such that 
 the services in ${\cal{WS}}^{*}$ are non-dominated and each service in $({\cal{WS}} \setminus {\cal{WS}}^{*})$ 
 is dominated by at least one service in ${\cal{WS}}^{*}$. 
 \hfill$\blacksquare$
\end{definition}

\begin{example}
 Consider ${\cal{WS}}$ be the set $\{{\cal{W}}_1,{\cal{W}}_2, {\cal{W}}_3, {\cal{W}}_4\}$ with QoS tuples 
 $(500, 7, 93\%)$, $(600, 13, 69\%)$, $(350, 4, 97\%)$, $(475, 3, 85\%)$ respectively
 (as in Table \ref{tab:exampleServices}).
 ${\cal{WS}}^{*} = \{{\cal{W}}_1, {\cal{W}}_2, {\cal{W}}_3\} \subset {\cal{WS}}$ is the set of
 skyline services. ${\cal{W}}_1$, ${\cal{W}}_2$ and ${\cal{W}}_3$ are non-dominated, while ${\cal{W}}_3$ dominates ${\cal{W}}_4$. 
 \hfill$\blacksquare$
\end{example}

\noindent
The skyline service set for to a given set of services is unique.

\begin{definition}{\em [\bf Non Dominated Tuple:]}
 Given a set of QoS tuples ${\cal{TP}}$, a tuple $t \in {\cal{TP}}$ is called non dominated,
 if $\nexists$ $t' \in {\cal{TP}},$ such that $t'$ is better than $t$.
 \hfill$\blacksquare$
\end{definition}

\noindent
A QoS tuple $t'$ is {\em{better than}} $t$ implies, each QoS parameter in $t'$ is {\em{at least as good as}} in
$t$, while at least one QoS parameter in $t'$ is {\em{better than}} in $t$, where the terms {\em{``at least as good as"}} 
and {\em{``better than"}} are used with the same meaning as defined earlier in 
the context of comparing two services.

\begin{example}
 Consider a set of QoS tuples ${\cal{TP}} = \{(500, 7, 93\%), (600, 13, 69\%), (350, 4, 97\%), (475, 3, 85\%)\}$. 
 ${\cal{TP}}^{*}$ = $\{(500, 7, 93\%)$, $(600, 13, 69\%)$, $(350, 4, 97\%)\}$ constitute the set of non dominated tuples, 
 since, the three tuples in ${\cal{TP}}^{*}$ are non dominated and $(475, 3, 85\%) \in {\cal{TP}}$ is dominated by 
 $(350, 4, 97\%) \in {\cal{TP}}^{*}$.
 \hfill$\blacksquare$
\end{example}

\noindent
As discussed earlier, a composition solution with respect to a query ${\cal{Q}}$ is 
a collection of services that are eventually activated by ${\cal{Q}}^{ip}$ and produce ${\cal{Q}}^{op}$.
During this process of activation, all functional dependencies are preserved.
We now define different characterizations of a solution.

\begin{definition}{\em [\bf Feasible Solution:]}
 A composition solution is feasible if it satisfies all local 
 (${\cal{LC}}$) and global (${\cal{GC}}$) constraints.
 \hfill$\blacksquare$
\end{definition}

\begin{example}
 Consider the service descriptions and the query discussed in Example \ref{exm:overview}.
 $({\cal{W}}_2, {\cal{W}}_{11}, ({\cal{W}}_{18} || {\cal{W}}_{20}))$ is a solution to
 the query, where ${\cal{W}}_{18}$ and ${\cal{W}}_{20}$ are executed in parallel, while 
 ${\cal{W}}_{2}$, ${\cal{W}}_{11}$ and the parallel combination of ${\cal{W}}_{18}$ and 
 ${\cal{W}}_{20}$ are executed sequentially. The QoS tuple for the composition solution  
 is $(3400, 2, 31\%)$ {\small{$[RT: 600 + 1300 + Max(400, 1500) = 3400;$ 
 $T: Min(13, 3, 2, 12) = 2;$ $R: 69\% * 65\% * 73\% * 94\% = 31\%]$}}.
 This does not satisfy any of the global constraints and 
 ${\cal{W}}_{11}$ violates the local constraint as well and therefore, is not feasible. Consider another solution, 
 $({\cal{W}}_1, {\cal{W}}_{13}, ({\cal{W}}_{17} || {\cal{W}}_{21}))$. 
 The QoS tuple for the solution is $(1800, 5, 72.15\%)$, which satisfies both the
 local and the global constraints. Therefore, the solution is feasible.
 \hfill$\blacksquare$
\end{example}

\begin{definition}{\em [\bf Non Dominated Solution:]}
 A composition solution $S_i$ with QoS tuple ${\cal{P}}^{(S_i)} = ({\cal{P}}^{(S_i)}_1, {\cal{P}}^{(S_i)}_2, 
 \ldots, {\cal{P}}^{(S_i)}_m)$ is a non-dominated solution, if and only if 
 $\nexists$ $S_j$ with QoS tuple ${\cal{P}}^{(S_j)} = ({\cal{P}}^{(S_j)}_1, {\cal{P}}^{(S_j)}_2, 
 \ldots, {\cal{P}}^{(S_j)}_m)$ such that $\exists$ ${\cal{P}}^{(S_j)}_k \in {\cal{P}}^{(S_j)}$ 
 for which ${\cal{P}}^{(S_j)}_k$ is better than ${\cal{P}}^{(S_i)}_k$ and rest of the parameters in ${\cal{P}}^{(S_j)}$ 
 are at least as good as in ${\cal{P}}^{(S_i)}$. 
 \hfill$\blacksquare$
\end{definition}

\noindent
In other words, $S_i$ has a better value for at least one QoS ${\cal{P}}^{(S_i)}_k \in {\cal{P}}^{(S_i)}$ 
than any solution $S_j$, $S_j \ne S_i$.


\begin{definition}{\em [\bf Pareto Front:]}
 The set of non-dominated solutions with respect to a query is called the Pareto front.
 \hfill$\blacksquare$
\end{definition}

\noindent
In a multi-objective composition problem, we may not find a single solution which is optimal in all 
respects, rather, we may find a Pareto front consisting of a set of non-dominated solutions. The 
feasible solutions obtained from the Pareto front constitute the optimal solution space of our problem. 

We now present an optimal solution generation technique.
Our proposal has two main phases: a preprocessing phase and a run-time computation phase.
The aim of the preprocessing phase is to reduce the number of services participating in solution construction, 
while the main aim of the run-time computation phase is to compute the solution in response to a query.
Below, we explain our proposal in detail.
\subsection{Preprocessing phase}
\noindent
The motivation behind preprocessing the web services is to reduce the run-time computation. We first define the notion of equivalent services, which serve as the foundation.

\begin{definition}{\em [\bf Equivalent Services:]}\label{def:equivalent}
 Two services ${\cal{W}}_i$ and ${\cal{W}}_j$ are equivalent (${\cal{W}}_i \simeq {\cal{W}}_j$),
 if the inputs of ${\cal{W}}_i$ are same as in ${\cal{W}}_j$, and the outputs of 
 ${\cal{W}}_i$ are same as in ${\cal{W}}_j$.
 \hfill$\blacksquare$
\end{definition}

\begin{example}
 Consider the first two services of Table \ref{tab:exampleServices}:
 ${\cal{W}}_1$ and ${\cal{W}}_2$ with input $\{i_1\}$ and outputs $\{io_4, io_5\}$.
 Here, ${\cal{W}}_1$ and ${\cal{W}}_2$ are equivalent, since, they have identical 
 inputs and outputs.
 \hfill$\blacksquare$
\end{example}




\noindent
Here, we apply the clustering technique proposed in \cite{7933195}. 
As the first step of preprocessing, we compute the set of equivalent services. Each equivalence class forms a cluster, while the set of equivalence classes of a given set of web services forms a partition of $W$. Therefore, the clusters are mutually exclusive and collectively exhaustive. We first divide the services in the service repository into multiple clusters and represent each cluster by a single representative service. Once the services are clustered, we find the skyline service set for each cluster. The set of skyline services is used for our run-time service composition step. 

The representative service corresponding to each cluster is associated with multiple QoS tuples corresponding to each service of the skyline service set. The main aim of this step is to prune the search space. Since the number of clusters must be less than or equal to the number of services in the service repository, the number of services reduces by this preprocessing. After preprocessing, we now have the following set of services: 
$W' = \{{\cal{W}}'_1, {\cal{W}}'_2, \ldots, {\cal{W}}'_{n'}\}$, where $n' \le n$ and each service ${\cal{W}}'_i \in W'$ consists of a set of QoS tuples ${\cal{P}}^{'(i)}$. Equality condition holds, only when the service repository does not contain any equivalent services and the preprocessing phase cannot reduce the number of services.


\begin{example}
 If we cluster the services shown in Table \ref{tab:exampleServices}, 
 the number of services reduces from 30 to 12.
 Table \ref{tab:exampleModifiedServices} shows the clustered service set.
 The first column of Table \ref{tab:exampleModifiedServices} presents the representative
 service for each cluster, while the second column shows the cluster itself. Finally,
 the third column indicates the set of QoS tuples corresponding to each service of the skyline
 services corresponding to a cluster. Consider the first cluster 
 $C_1: \{{\cal{W}}_1, {\cal{W}}_2, {\cal{W}}_3, {\cal{W}}_4\}$, shown in the first row of 
 Table \ref{tab:exampleModifiedServices}. ${\cal{W}}'_1$ is the representative service corresponding
 to $C_1$. The skyline service set of $C_1$ is $C^{*}_1: \{{\cal{W}}_1, {\cal{W}}_2, {\cal{W}}_3\}$.
 Therefore, ${\cal{P}}^{'(1)}$ consists of the QoS tuple corresponding to each service in $C^{*}_1$.
 
 \begin{table}[!ht]
\tiny
\caption{\scriptsize{Description of services after preprocessing}}
\centering
\begin{tabular}{c|c|c}
 Representative & Cluster & (RT, T, R)\\
 Web Service &  &  \\
 \hline\hline
 ${\cal{W}}'_1$ & $\{{\cal{W}}_1, {\cal{W}}_2, {\cal{W}}_3, {\cal{W}}_4\}$ & (500, 7, 93\%), (350, 4, 97\%), (600, 13, 69\%)\\
 \hline
 ${\cal{W}}'_2$ & $\{{\cal{W}}_5, {\cal{W}}_6\}$ & (700, 19, 90\%)\\
 \hline
 ${\cal{W}}'_3$ & $\{{\cal{W}}_7\}$ & (1100, 9, 80\%)\\
 \hline
 ${\cal{W}}'_4$ & $\{{\cal{W}}_8, {\cal{W}}_9, {\cal{W}}_{10}\}$ & (300, 13, 79\%)\\
 \hline
 ${\cal{W}}'_5$ & $\{{\cal{W}}_{11}, {\cal{W}}_{12}, {\cal{W}}_{13}, {\cal{W}}_{14}\}$ & (400, 9, 93\%)\\
 \hline
 ${\cal{W}}'_6$ & $\{{\cal{W}}_{15}, {\cal{W}}_{16}\}$ & (700, 17, 91\%), (500, 13, 90\%)\\
 \hline
 ${\cal{W}}'_7$ & $\{{\cal{W}}_{17}, {\cal{W}}_{18}, {\cal{W}}_{19}\}$ & (150, 5, 86\%)\\
 \hline
 ${\cal{W}}'_8$ & $\{{\cal{W}}_{20}, {\cal{W}}_{21}\}$ & (900, 14, 97\%)\\
 \hline
 ${\cal{W}}'_9$ & $\{{\cal{W}}_{22}, {\cal{W}}_{25}, {\cal{W}}_{26}\}$ & (1700, 14, 87\%), (1400, 13, 83\%)\\
 \hline
 ${\cal{W}}'_{10}$ & $\{{\cal{W}}_{23}, {\cal{W}}_{24}\}$ & (1100, 10, 80\%), (1700, 12, 81\%)\\
 \hline
 ${\cal{W}}'_{11}$ & $\{{\cal{W}}_{27}, {\cal{W}}_{28}\}$ & (1100, 15, 94\%)\\
 \hline
 ${\cal{W}}'_{12}$ & $\{{\cal{W}}_{29}, {\cal{W}}_{30}\}$ & (500, 17, 72\%), (350, 12, 74\%)\\
\end{tabular}\label{tab:exampleModifiedServices}
\end{table}

\noindent
Consider the query in Example \ref{exm:overview}. The number of services reduces from 30 to 12. The number of QoS tuples reduces from 30 to 18.\hfill$\blacksquare$

\end{example}

\noindent
The preprocessing step helps to prune the search space by removing some services.
No useful solution in terms of QoS values is lost in preprocessing, as stated
formally below.



\begin{lemma}\label{lemma:pareto}
 The preprocessing step is Pareto optimal solution preserving in terms of QoS values.
 \hfill$\blacksquare$
\end{lemma}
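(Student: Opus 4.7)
The plan is to show that for every Pareto optimal solution built from the original service set $W$, there is a corresponding solution built from the preprocessed set $W'$ whose QoS tuple is at least as good (componentwise). This will imply that the Pareto front obtained over $W'$ contains every QoS tuple that appears on the Pareto front over $W$, so no Pareto optimal QoS value is lost by preprocessing.

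First I would fix an arbitrary Pareto optimal composition solution $S$ over $W$ that is feasible for a query $\mathcal{Q}$, and walk through its participating services one by one. For each $\mathcal{W}_i \in S$, let $C_i$ denote its equivalence class (cluster) and $C^{*}_i$ its skyline subset. If $\mathcal{W}_i \in C^{*}_i$, keep it; otherwise replace $\mathcal{W}_i$ by a service $\mathcal{W}_j \in C^{*}_i$ that dominates it, which must exist by the definition of a skyline set. Call the resulting composition $S'$. By Definition~\ref{def:equivalent}, $\mathcal{W}_j$ has exactly the same input and output sets as $\mathcal{W}_i$, so the input–output dependency structure of $S$ is preserved verbatim in $S'$; in particular, the services of $S'$ are still eventually activated by $\mathcal{Q}^{ip}$ and still produce $\mathcal{Q}^{op}$. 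Thus $S'$ is a syntactically valid composition over $W'$.

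Next I would argue that $S'$ is at least as good as $S$ on every QoS parameter. Because $\mathcal{W}_j$ dominates $\mathcal{W}_i$, each coordinate of $\mathcal{P}^{(j)}$ is at least as good as the corresponding coordinate of $\mathcal{P}^{(i)}$. Since the aggregation functions in $\mathcal{F}$ (max, min, sum, product on non-negative quantities) are all coordinatewise monotone in the ``at least as good as'' order, applying the swap cannot worsen any aggregated QoS value of the composition. Performing the swap for every non-skyline service in $S$ yields a composition $S'$ over $W'$ with $\mathcal{P}^{(S')}$ at least as good as $\mathcal{P}^{(S)}$ on every parameter. The same monotonicity plus the fact that each replacement service individually respects the local constraints (it dominates a service that already did) guarantees that $S'$ remains feasible with respect to both $\mathcal{LC}$ and $\mathcal{GC}$.

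Finally I would conclude: if $\mathcal{P}^{(S')}$ were strictly better than $\mathcal{P}^{(S)}$ in some coordinate while being at least as good in the rest, then $S$ would be dominated in the original search space (since $S'$ itself is realizable over $W \supseteq W'$), contradicting Pareto optimality of $S$. Hence $\mathcal{P}^{(S')} = \mathcal{P}^{(S)}$, so the QoS tuple of every original Pareto optimal solution is realized by some composition over $W'$, proving the lemma. The main obstacle I anticipate is making the monotonicity step rigorous for the multiplicative aggregation (reliability) together with the parallel/sequential composition operators, and verifying that constraint satisfaction is preserved componentwise under the swap; everything else is a direct application of the skyline and equivalence definitions.
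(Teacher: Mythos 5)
Your proposal is correct and follows essentially the same route as the paper's proof: replace each dominated service removed in preprocessing by a dominating member of its cluster (which by equivalence has identical inputs/outputs, so functional validity is untouched), and use monotonicity of the four aggregation functions to conclude the resulting composition is at least as good componentwise. The paper phrases this as a contradiction argument and spells out the monotonicity case analysis for sum, product, max, and min explicitly (the step you flag as the remaining obstacle), but the substance is identical; your added observations about constraint preservation and the final equality of QoS tuples are consistent refinements rather than a different argument.
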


\noindent
All proofs are compiled in Appendix.

\subsection{Dependency graph construction}
\noindent
The composition solutions are generated at run-time in response to a query. To find a response to a query, a dependency graph is constructed first. The dependency graph ${\cal{D}} = (V, E)$ is a directed graph, where $V$ is the set of nodes and $E$ is the set of edges. Each node $v_i \in V$ corresponds to a service ${\cal{W}}'_i \in W'$ that is eventually activated by the query inputs and each directed edge $(v_i, v_j) \in E$ represents a direct dependency between two services, i.e., the service corresponding to the node $v_i$ produces an output which is an input of the service corresponding to the node $v_j$. Each edge is annotated by the input-output of the services. Each solution to a query is either a path or a subgraph of ${\cal{D}}$ \cite{chattopadhyay2015scalable}.

The dependency graph is constructed using the algorithm illustrated in \cite{DBLP:journals/tweb/ChattopadhyayBB17}. While constructing the dependency graph, here we additionally validate the local and global constraints. While the local constraints are validated once, when a service is selected for the first time, the global constraints are validated in each step of the solution construction. While an activated service is selected for node construction, the service is first validated against the set of local and global constraints. Each service ${\cal{W}}'_i$ corresponds to a set of skyline services. If any service from the skyline services violates any local / global constraint, we disregard that service by removing its corresponding QoS tuple from ${\cal{P}}'_i$ of ${\cal{W}}'_i$. If ${\cal{P}}'_i$ is empty, we do not construct any node corresponding to ${\cal{W}}'_i$. It may be noted, if a service ${\cal{W}}'_i$ violates any of the global constraints, any solution that includes ${\cal{W}}'_i$ also violates the global constraint.

\begin{example}
 Consider Example \ref{exm:overview}. To respond to the query, while constructing the dependency graph, four services ${\cal{W}}'_1, {\cal{W}}'_2, {\cal{W}}'_3$ and ${\cal{W}}'_4$ are activated from the query inputs at first. It may be noted, ${\cal{W}}'_1$ is associated with three QoS tuples (500, 7, 93\%), (350, 4, 97\%) and (600, 13, 69\%), out of which one tuple (600, 13, 69\%)
 violates ${\cal{LC}}_1$, since its reliability is less than 70\%. Therefore, while validating ${\cal{LC}}_1$, the third tuple, i.e., (600, 13, 69\%) is removed from ${\cal{P}}^{'(1)}$ corresponding to ${\cal{W}}'_1$. 
 \hfill$\blacksquare$
\end{example}

\noindent
During dependency graph construction, the set of services that can be activated by the query inputs are identified first. With the set of identified services, the dependency graph is constructed. Finally, backward breadth first search (BFS) is used in ${\cal{D}}$ to identify the set of nodes that are required to produce the set of query outputs. The remaining nodes are removed from the graph.

\begin{figure}[!htb]
\centering
\includegraphics[width=\linewidth]{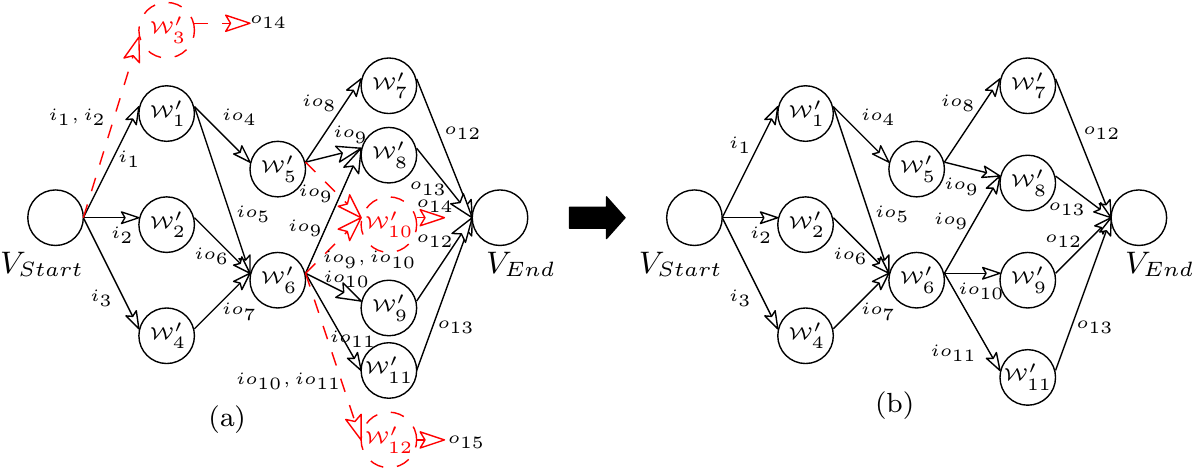}
\caption{Dependency Graph in response to a query (a) generated from query inputs (b) after backward traversal}
\label{fig:dependency}
\end{figure}

\begin{example}
 Consider the query in Example \ref{exm:overview}. Figure \ref{fig:dependency} shows the dependency graph constructed over the 
 services described in Table \ref{tab:exampleModifiedServices} in response to the query. Figure \ref{fig:dependency}(a) shows the dependency graph constructed from the query inputs, while Figure \ref{fig:dependency}(b) shows the one generated after removal of 
 unused nodes. In Figure \ref{fig:dependency}(a), the nodes marked with red represent the services that do not take part to produce the query outputs.
 \hfill$\blacksquare$
\end{example}

\noindent
If the dependency graph consists of a loop, we identify the loop and break the cycle \cite{rodriguez2015hybrid}. Finally, we partition the dependency graph into multiple layers using the approach used in \cite{DBLP:journals/tweb/ChattopadhyayBB17}, where a node $v_i$ belongs to a layer $L_k$, if for all the edges $(v_j, v_i)$, $v_j$ belongs to any layer $L_{k'}$, where $k' < k$. The first layer $L_0$ consists of a single node $V_{Start}$. Finally, we introduce dummy nodes in each layer if necessary as demonstrated in \cite{DBLP:journals/tweb/ChattopadhyayBB17}, to ensure that each node in a layer $L_i$ is connected only to the nodes in either its immediate predecessor layer $L_{(i - 1)}$ or its immediate successor layer $L_{(i + 1)}$. We assume that each dummy node has a QoS tuple with the best value for each QoS parameter. If a solution to a query consists of any dummy node, the dummy node is removed from the solution while returning the solution. The above assumption ensures that after removal of the dummy nodes, the QoS values of the solution remain unchanged. In the next subsection, we discuss the feasible Pareto optimal solution frontier generation technique.

\section{Pareto Front Construction}
\noindent
\noindent
To find the feasible Pareto optimal solutions, we transform the dependency graph into a 
layered path generation graph (LPG). LPG $G_P = (V_P, E_P)$ is a 
directed acyclic graph, where $V_p$ is a set of nodes and $E_p$ is a set of edges.
Each node $v^{(p)}_i \in V_p$ consists of a set of nodes 
$\{v_{i_1}, v_{i_2}, \ldots, v_{i_l}\} \subseteq V$
of ${\cal{D}}$. A directed edge from $v^{(p)}_i$ to $v^{(p)}_j$ exists, if each 
service corresponding to a node $v_{j_k}$, belonging to $v^{(p)}_j$ is activated by the 
outputs of the services corresponding to the nodes, belonging to $v^{(p)}_i$.
Similar to the dependency graph, the LPG also consists of two dummy nodes: a start 
node $V^{(p)}_{Start}$ and an end node $V^{(p)}_{End}$
consisting of the start node and the end node of ${\cal{D}}$ respectively.
We assume that each dummy node has a QoS tuple with the best value for each QoS parameter. 
While constructing $G_P$, we simultaneously compute the Pareto optimal solution frontier 
and validate the global constraints. We define the notion of a cumulative Pareto optimal tuple. 

\begin{definition}{\em [\bf Cumulative Pareto Optimal Tuple:]}
 A set of non dominated QoS tuples, generated due to the composition of a set of services 
 during an intermediate step of the solution construction, is called a cumulative Pareto optimal tuple.
 \hfill$\blacksquare$
\end{definition}

\noindent
The cumulative Pareto optimal tuples, generated at the final step of the solution construction, 
is the Pareto front. For each node $v^{(p)}_i \in V_P$, we maintain two sets of 
QoS tuples: a set of non dominated tuples and a set of cumulative Pareto optimal tuples. 
We now discuss the construction of $G_P$.
 \begin{algorithm}
 \scriptsize
   \caption{Graph Conversion and Solution Generation}
   \begin{algorithmic}[1]
    \State {Input: ${\cal{D}} = (V, E)$, ${\cal{GC}}$}
    \State {Output: $G_P = (V_P, E_P)$}
    \State Queue $Q =$ Insert$(V^{(P)}_{End})$;
    \Repeat
     \State $v^{p}_i = $ Remove($Q$);\Comment{$v_i$ corresponding to ${\cal{W}}'_i$}
     \State ${\cal{PW}} \leftarrow$ the set predecessor nodes of $v^{p}_i$;\label{step:predecessor}
     \For {$v^{p}_j \in {\cal{PW}}$}
       \If {$v^{p}_j$ is not constructed earlier}
	\State ${\cal{P}}^{''(j)} \leftarrow$ the set of non dominated tuples corresponding to $v^{p}_j$;
	\If {$p \in {\cal{P}}^{''(j)}$ does not satisfy any global constraints}
	  \State Remove $p$ from ${\cal{P}}^{''(j)}$;
	\EndIf	
	\State {\bf{if}} {${\cal{P}}^{''(j)} = \phi$}, {\bf{then}} continue;
	\State $Q =$ Insert$(v^{p}_j)$;
       \EndIf
       \State Construct an edge $(v^{p}_j, v^{p}_i)$;
       \State ${\cal{CP}}^{(j)} \leftarrow$ The cumulative Pareto front of $v^{p}_j$ is constructed from \indent\indent \indent
	      (${\cal{CP}}^{(j)} \cup $ Combination of (${\cal{P}}^{''(j)}, {\cal{CP}}^{(i)}$));
       \If {$p \in {\cal{CP}}^{(j)}$ does not satisfy any global constraints}\label{step:constraintCheckStart}
	  \State Remove $p$ from ${\cal{CP}}^{(j)}$;
	\EndIf\label{step:constraintCheckEnd}
     \EndFor
    \Until{$(Q \ne \phi)$}
   \end{algorithmic}
   \label{algo:graphConversion}
 \end{algorithm}
  \begin{figure}[!htb]
  \centering
   \includegraphics[width=\linewidth]{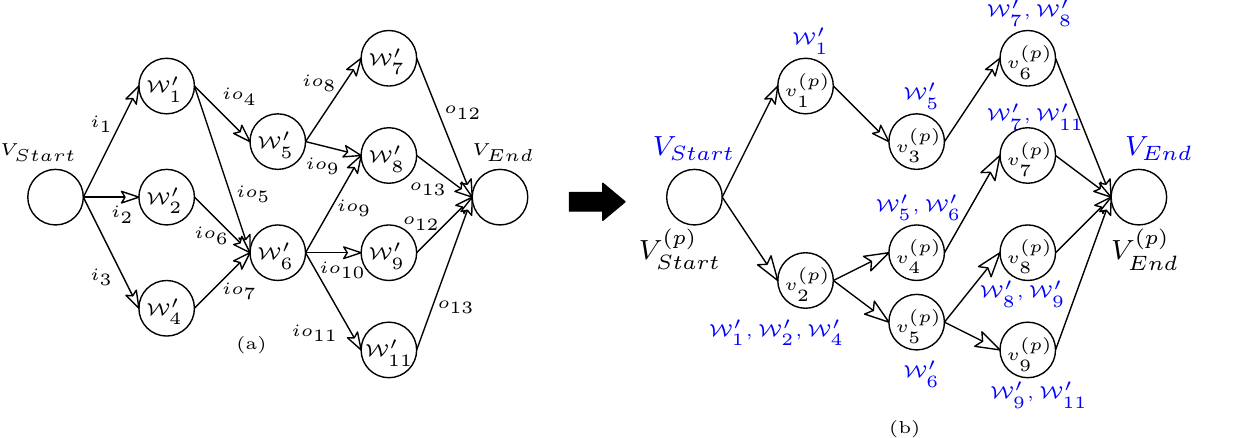}
  \caption{Conversion of Dependency Graph to LPG}
  \label{fig:conversion}
 \end{figure}
To construct $G_P$, we traverse the graph ${\cal{D}}$ in a backward direction, starting from 
the node $V_{End}$. We start the transformation from dependency graph to LPG
by constructing a dummy node 
$V^{(p)}_{End}$ of $G_P$ consisting of $V_{End}$ of ${\cal{D}}$. During the procedure, we maintain a
FIFO (i.e., First In First Out) queue. 
The following steps convert ${\cal{D}}$ to $G_P$:

\begin{figure*}[!htb]
  \centering
   \includegraphics[height=4cm,width=\linewidth]{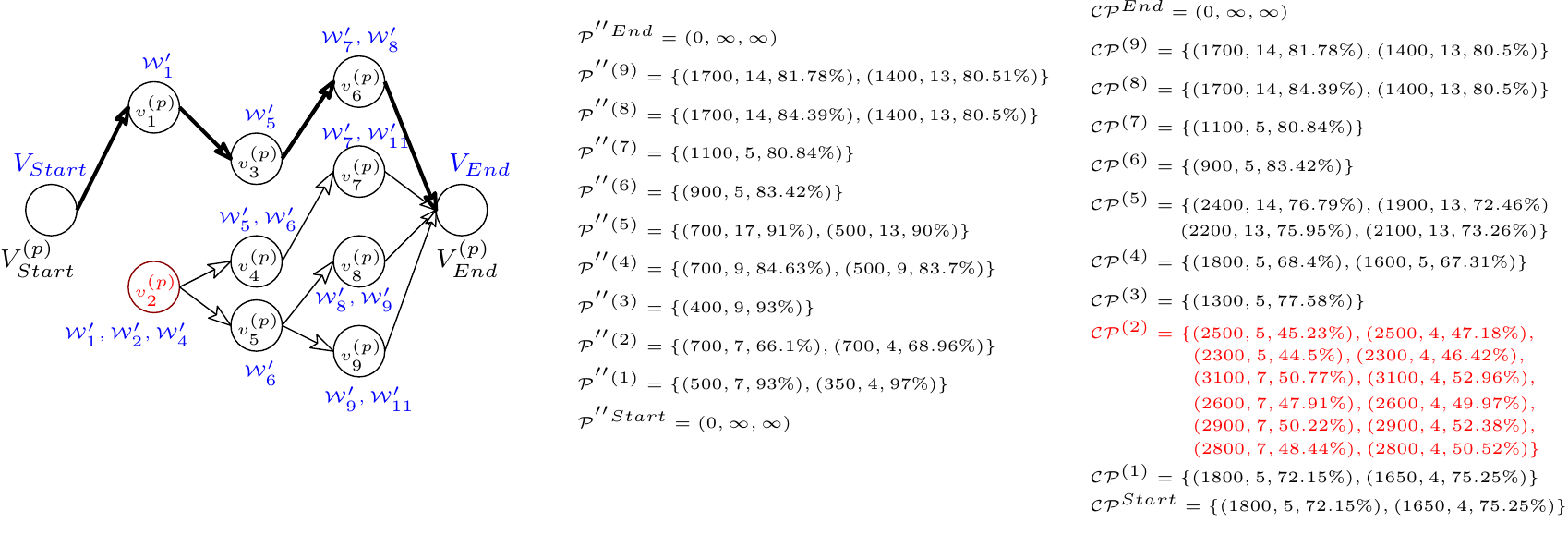}
  \caption{Pareto Optimal Front Construction}
  \label{fig:pareto}
 \end{figure*}

\begin{itemize}
 \item The first node $v^{{p}}_i$ is removed from the queue.
 \item The set of predecessor nodes of $v^{{p}}_i$ is constructed.
 \item For each predecessor node $v^{{p}}_j$ of $v^{{p}}_i$, 
	the temporary Pareto optimal front till $v^{{p}}_j$ is constructed or modified (for already existing nodes).
 \item For each Pareto optimal QoS tuple till $v^{{p}}_j$, the global constraints are validated.
	If any global constraint is violated, the tuple from the Pareto front is removed.
 \item Each predecessor node is inserted in the queue, if the queue does not already hold the same.
\end{itemize}

\noindent
We briefly elaborate each step below.
We first insert $V^{(p)}_{End}$ in the queue and then continue the procedure until the queue becomes empty.
In each step, we remove a node from the queue, say $v^{{p}}_i$ (in FIFO basis) and construct its predecessor 
nodes as described below.

Consider a node $v^{(p)}_i \in V_P$ consisting of a set of nodes $\{v_{i_1}, v_{i_2}, \ldots, v_{i_l}\} \subseteq V$
of ${\cal{D}}$. Also, consider $io_1, io_2, \ldots, io_x$ be the set of inputs that are required to activate the 
services corresponding to $\{v_{i_1}, v_{i_2}, \ldots, v_{i_l}\}$. For each $io_j \in \{io_1, io_2, \ldots, io_x\}$, 
we compute a set of nodes $V^{(io_j)} \subset V$, such that an edge $e \in E$ annotated by $io$ is incident to at least 
one node in $\{v_{i_1}, v_{i_2}, \ldots, v_{i_l}\}$. We then compute a set of combinations of nodes in ${\cal{D}}$ 
consisting of a node from each $V^{(io_j)}$, for all $io_j \in \{io_1, io_2, \ldots, io_x\}$. We now define 
the notion of a redundant service.

\begin{definition}{\em [\bf Redundant Service:]}
 A service ${\cal{W}}_k$ belonging to a solution $S_i$ in response to a query ${\cal{Q}}$ is 
 redundant, if $S_i \setminus \{{\cal{W}}_k\}$ is also a solution to ${\cal{Q}}$.
 \hfill$\blacksquare$
\end{definition}

\noindent
We consider the following assumption: a solution $S_i$ with some redundant services 
$\{{\cal{W}}_{i_1}, {\cal{W}}_{i_2}, \ldots, {\cal{W}}_{i_k}\}$ cannot be better, 
in terms of QoS values, than $S_i \setminus \{{\cal{W}}_{i_1}, {\cal{W}}_{i_2}, \ldots, {\cal{W}}_{i_k}\}$.
Two sets $V^{(io_j)}$ 
and $V^{(io_k)}$ may not be mutually exclusive, for $io_j, io_k \in \{io_1, io_2, \ldots, io_x\}$, since the service 
corresponding to one node may produce more than one output from $\{io_1, io_2, \ldots, io_x\}$. Therefore, if we consider 
a combination $c$ consisting of one node from each $V^{(io_j)}$, where 
$io_j \in \{io_1, io_2, \ldots, io_x\}$ , we do not need to
consider any combination which is a superset of $c$. For each combination, we construct a node $v^{(p)}_y$ and an edge 
$(v^{(p)}_y, v^{(p)}_i)$ of $G_P$.

\begin{example}
Fig. \ref{fig:conversion}(b) shows the LPG generated from 
 the dependency graph in Fig. \ref{fig:conversion}(a). Consider the end node $V^{(p)}_{End}$ 
 of Fig.\ref{fig:conversion}(b). $V^{(p)}_{End}$ consists of $V_{End}$. $\{o_{12}, o_{13}\}$ is the 
 required set of inputs (i.e., query outputs). $V^{(o_{12})} = \{{\cal{W}}'_7, {\cal{W}}'_9\}$ and 
 $V^{(o_{13})} = \{{\cal{W}}'_8, {\cal{W}}'_{11}\}$. We get 4 combinations from $V^{(o_{12})}, V^{(o_{13})}$
 and construct a node for each combination and the corresponding edges. 
 Consider another node of $G_P$ consisting of $\{{\cal{W}}'_7, {\cal{W}}'_8\}$. The required set of 
 inputs is $\{io_8, io_9\}$. $V^{(io_{8})} = \{{\cal{W}}'_5\}$ and $V^{(io_9)} = \{{\cal{W}}'_5, {\cal{W}}'_6\}$.
 We get 2 combinations from $V^{(io_8)}, V^{(io_9)}$. However, one combination $\{{\cal{W}}'_5, {\cal{W}}'_6\}$
 is a superset of another $\{{\cal{W}}'_5\}$. Therefore, we disregard the combination 
 $\{{\cal{W}}'_5, {\cal{W}}'_6\}$ and construct a node corresponding to $\{{\cal{W}}'_5\}$ and the corresponding edge. 
 \hfill$\blacksquare$  
\end{example}

\noindent
We now prove the following lemma.

\begin{lemma}
 Each path from $V^{(p)}_{Start}$ to $V^{(p)}_{End}$ in $G_P$ represents a solution to the query in terms 
 of functional dependencies.
 \hfill$\blacksquare$
\end{lemma}

\noindent
Once a node of $G_P$ is constructed, we construct the set of Pareto optimal tuples corresponding to the node.
Consider a node $v^{(p)}_i \in V_P$ consisting of a set of nodes 
$\{v_{i_1}, v_{i_2}, \ldots, v_{i_l}\} \subseteq V$ of ${\cal{D}}$. 
The QoS tuples corresponding to $\{v_{i_1}, v_{i_2}, \ldots, v_{i_l}\}$ 
are combined and a new set of tuples, ${\cal{P}}^{''(i)}$, is constructed. Each tuple in 
${\cal{P}}^{''(i)}$ is then validated against the set of global constraints ${\cal{GC}}$. 
If any tuple violates any of the global constraints,
the tuple is removed from ${\cal{P}}^{''(i)}$. If no tuple from ${\cal{P}}^{''(i)}$ satisfies the 
global constraints, we disregard the node $v^{(p)}_i$. 
Otherwise, we compute the set of non dominated tuples from ${\cal{P}}^{''(i)}$ and associate these with $v^{(p)}_i$. 
Consider $v^{(p)}_j$ is removed from the queue and $v^{(p)}_i$ is created as the predecessor 
of $v^{(p)}_j$. If $v^{(p)}_i$ already exists in the queue, we do not need to recompute the set of non dominated tuples
of $v^{(p)}_i$.
Once the set of non dominated tuples corresponding to $v^{(p)}_i$ are constructed, we construct the cumulative 
Pareto optimal solutions till $v^{(p)}_i$. 

In order to find the Pareto front till $v^{(p)}_i$, we combine the tuples in the Pareto front 
constructed till $v^{(p)}_j$ with the set of non dominated tuples of $v^{(p)}_i$. The combined tuples are verified
against the global constraints and if any tuple violates any of the global constraints, we remove the 
tuple from the combined set. Finally, we compute the cumulative Pareto optimal solutions till $v^{(p)}_i$ from 
the set of combined tuples and the cumulative Pareto front of $v^{(p)}_i$. The Pareto front constructed in 
$V^{(p)}_{Start}$ constitutes the feasible Pareto optimal solutions to the query.

\begin{example}
 Fig. \ref{fig:pareto} shows the Feasible Pareto front generation method on a LPG. 
 The set of initial non dominated tuples of $V^{(p)}_{End}$ consists of one tuple $(0, \infty, \infty)$, initialized with the best values of these parameters.
 The cumulative Pareto optimal front of $V^{(p)}_{End}$ also consists of the same tuple. 
 
 Now consider a node $v^{(p)}_5$. When $v^{(p)}_5$ is created as a predecessor of $v^{(p)}_8$, 
 the set of non dominated tuples ${\cal{P}}^{''(5)}$ corresponding to $v^{(p)}_5$ are constructed first. 
 The cumulative Pareto front ${\cal{CP}}^{(5)}$ till $v^{(p)}_5$ is constructed next by combining 
 the cumulative Pareto front till $v^{(p)}_8$ and the set of non dominated tuples of $v^{(p)}_5$, followed by 
 selecting the Pareto front from the combined set. In the next iteration, when $v^{(p)}_5$ is 
 constructed as a predecessor of $v^{(p)}_9$, the set of non dominated tuples are not recomputed. However, 
 ${\cal{CP}}^{(5)}$ is modified. The cumulative Pareto front till $v^{(p)}_9$ and the set of non dominated tuples 
 of $v^{(p)}_5$ are combined first and then the Pareto front is selected from the combined set and 
 the already existing set ${\cal{CP}}^{(5)}$. It may be noted, the cumulative Pareto front till $v^{(p)}_2$
 violates the global constraints. Hence, the node is disregarded from the graph.
 The final solution path is marked by the bold line.
 \hfill$\blacksquare$ 
\end{example}

\noindent
Algorithm \ref{algo:graphConversion} presents the formal algorithm for constructing the feasible Pareto 
optimal solution in response to a query. We now prove the following lemma.

\begin{lemma}\label{lemma:complete}
 Algorithm \ref{algo:graphConversion} is complete.
 \hfill$\blacksquare$
\end{lemma}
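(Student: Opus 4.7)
The plan is to prove completeness by showing two things in sequence: first, that every feasible composition solution to the query corresponds to some path from $V^{(p)}_{Start}$ to $V^{(p)}_{End}$ in $G_P$; and second, that the cumulative Pareto front maintained at each node faithfully accumulates every non-dominated tuple reachable from that node to $V^{(p)}_{End}$, so that nothing is pruned prematurely. Termination of the algorithm is immediate, since each node of the (finite) LPG is inserted into the queue at most once, and the construction of predecessor combinations is finite at each step.

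For the first part, I would fix an arbitrary feasible solution $S$ and reconstruct its corresponding path in $G_P$ by walking backward from the query outputs. At each step, the services in $S$ that directly produce the currently required input set form a subset of some $V^{(io_j)}$; taking one such service per required input yields a combination that the algorithm must enumerate while processing the predecessors of the current node (line~\ref{step:predecessor} of Algorithm~\ref{algo:graphConversion}). I would invoke the assumption about redundant services from the preceding subsection to justify that, if $S$ happens to contain a superset of some minimal combination, we lose nothing by picking the minimal one, since the restricted solution is at least as good in every QoS dimension. Iterating this construction layer by layer produces a sequence of $G_P$-nodes ending at $V^{(p)}_{Start}$ and hence a path witnessing $S$.

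For the second part, I would prove by backward induction on the distance from $V^{(p)}_{End}$ the following invariant: for every node $v^{(p)}_i$ constructed by the algorithm, the set ${\cal{CP}}^{(i)}$ equals the set of non-dominated QoS tuples achievable by some feasible partial solution starting at $v^{(p)}_i$ and reaching $V^{(p)}_{End}$. The base case at $V^{(p)}_{End}$ is trivial. For the inductive step, the invariant for a successor $v^{(p)}_j$ together with the combination and non-domination step performed when $v^{(p)}_i$ is created as its predecessor (lines 17--\ref{step:constraintCheckEnd}) shows that every non-dominated extension from $v^{(p)}_i$ is produced; the explicit global-constraint checks in lines \ref{step:constraintCheckStart}--\ref{step:constraintCheckEnd} and during node creation guarantee that only infeasible tuples are removed. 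A small but important sub-lemma here is monotonicity of the aggregation functions with respect to the dominance relation, which ensures that combining a Pareto-optimal tail with the non-dominated local tuples at $v^{(p)}_i$ indeed yields all Pareto-optimal completions through $v^{(p)}_i$; this is essentially the argument already invoked for Lemma~\ref{lemma:pareto}, so I would cite it.

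The main obstacle will be the bookkeeping around the second visit of a node $v^{(p)}_i$ when it is discovered as a predecessor of a different successor $v^{(p)}_j$. Here ${\cal{CP}}^{(i)}$ is updated rather than recomputed from scratch, and I must show that the union with the newly combined tuples, followed by non-domination filtering, preserves the invariant above. I would argue that, since ${\cal{CP}}^{(i)}$ already contains all feasible non-dominated tails through the previously seen successors, and the combination step contributes the corresponding tails through $v^{(p)}_j$, their union is exactly the feasible tail set through all currently discovered successors; the backward BFS order guarantees that by the time the algorithm terminates every successor has been visited, so the final ${\cal{CP}}^{(i)}$ at $V^{(p)}_{Start}$ coincides with the complete feasible Pareto front for the query, establishing completeness.
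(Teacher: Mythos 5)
Your proposal is correct and follows essentially the same route as the paper's proof: your first part (every feasible solution is witnessed by a path of $G_P$, via exhaustive enumeration of activating combinations plus the redundant-service assumption) is the paper's Case~1, and your second part (non-domination pruning is safe because the aggregation functions are monotone with respect to dominance, citing the argument of Lemma~\ref{lemma:pareto}) is exactly how the paper disposes of its Cases~2 and~3, where a pruned tuple is shown to be matched or dominated by a surviving one. The only difference is presentational: the paper argues by contradiction over three cases, whereas you phrase the pruning-safety step as a backward-induction invariant on ${\cal{CP}}^{(i)}$, which makes the multi-successor update explicit but adds no new idea.
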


\begin{figure*}[!htb]
  \centering
   \includegraphics[height=3.5cm,width=\linewidth]{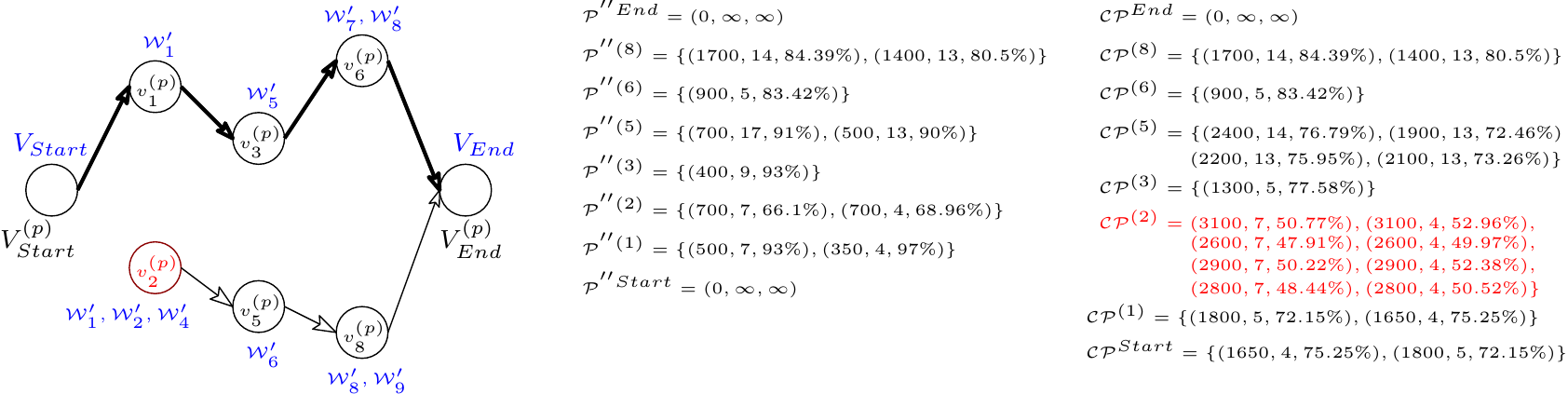}
  \caption{Heuristic solution Construction}
  \label{fig:heuristic}
 \end{figure*}

\begin{lemma}\label{lemma:sound}
 Algorithm \ref{algo:graphConversion} is sound.
 \hfill$\blacksquare$
\end{lemma}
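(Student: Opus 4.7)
My plan is to show that every tuple returned at $V^{(p)}_{Start}$ corresponds to (i) an actual composition solution, (ii) one that satisfies all local and global constraints, and (iii) one whose QoS tuple is achievable by that composition and is non-dominated among all generated solutions. These four conditions together constitute soundness.

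First I would establish functional validity. Every tuple maintained in the algorithm is attached to a node $v^{(p)}_i$ of the LPG, and upon termination it is carried along some sequence of edges from $V^{(p)}_{End}$ back to $V^{(p)}_{Start}$; reversing this trace produces a path in $G_P$ from $V^{(p)}_{Start}$ to $V^{(p)}_{End}$. Invoking the previous lemma (on LPG paths corresponding to solutions), this path encodes a set of services which, in composition, activates all required intermediate inputs and produces ${\cal{Q}}^{op}$ from ${\cal{Q}}^{ip}$. Hence the returned object is a bona fide composition solution.

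Next I would argue constraint satisfaction. For local constraints, the dependency graph construction already discards any QoS tuple of any ${\cal{W}}'_i$ that violates a constraint in $\cal{LC}$, and omits the node entirely if its QoS tuple set becomes empty; thus no service instance surviving into $G_P$ violates $\cal{LC}$, so any composition formed from these services satisfies all local constraints. For global constraints, lines~\ref{step:constraintCheckStart}--\ref{step:constraintCheckEnd} of Algorithm~\ref{algo:graphConversion} (and the analogous check on ${\cal{P}}^{''(j)}$ just before) explicitly remove from ${\cal{CP}}^{(j)}$ any cumulative tuple violating $\cal{GC}$. Since the aggregation functions are monotone, and the cumulative tuple at $v^{(p)}_j$ is computed by aggregating ${\cal{P}}^{''(j)}$ with a cumulative tuple at the successor $v^{(p)}_i$, the value stored in ${\cal{CP}}^{(j)}$ is precisely the QoS value that the partial composition rooted at $v^{(p)}_j$ will contribute; thus, by induction on the distance to $V^{(p)}_{End}$, any tuple surviving in ${\cal{CP}}^{(j)}$ corresponds to a partial solution whose every extension to $V^{(p)}_{End}$ (already fixed by the chosen tuple) satisfies all global constraints.

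Finally I would establish Pareto optimality of the returned set. At each node $v^{(p)}_j$ the algorithm keeps in ${\cal{CP}}^{(j)}$ only tuples that are non-dominated among those generated so far; after processing all successors of $v^{(p)}_j$, the set ${\cal{CP}}^{(j)}$ is therefore a non-dominated subset of the QoS values achievable by any feasible partial composition starting at $v^{(p)}_j$. Combining this with the fact (shown above) that every such tuple is realized by some concrete composition gives that the set returned at $V^{(p)}_{Start}$ consists entirely of feasible, non-dominated, and hence Pareto optimal solutions. The main obstacle, and the step I would write out most carefully, is the inductive argument showing that dominance at the level of cumulative tuples is preserved under the aggregation functions — this parallels the case analysis used in the proof of Lemma~\ref{lemma:pareto} and relies crucially on the monotonicity of $\mathrm{Sum}$, $\mathrm{Prod}$, $\mathrm{Max}$, and $\mathrm{Min}$ in their arguments, together with the assumption that redundant services cannot strictly improve a solution's QoS.
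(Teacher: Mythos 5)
Your proof is correct, but it takes a genuinely different route from the paper's. The paper dispatches soundness in two short steps: feasibility follows immediately from the constraint checks in Steps \ref{step:constraintCheckStart}--\ref{step:constraintCheckEnd} of Algorithm \ref{algo:graphConversion}, and membership in the Pareto front is obtained \emph{indirectly} by leaning on Lemma \ref{lemma:complete} (completeness): if a generated $S_i$ were dominated by some $S_j$, completeness guarantees the algorithm also produces a dominating feasible solution, and the non-dominated filtering would then have eliminated $S_i$ --- contradiction. You instead argue directly and constructively: functional validity via the path lemma, feasibility via the constraint checks plus monotonicity, and non-domination via an inductive invariant that each ${\cal{CP}}^{(j)}$ is a non-dominated covering of all QoS values achievable by feasible partial compositions from $v^{(p)}_j$ onward. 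That invariant is essentially the content of Cases 2 and 3 of the paper's completeness proof (which in turn rest on the monotonicity case analysis of Lemma \ref{lemma:pareto}), so you are re-deriving a piece of completeness inline rather than citing it. The paper's route buys brevity and a clean modular dependence on Lemma \ref{lemma:complete}; your route buys self-containment and makes explicit the passage from ``non-dominated among generated tuples'' to ``non-dominated among all achievable tuples,'' which the paper handles only implicitly (and slightly loosely, since the dominating $S_j$ it invokes need not itself lie on the Pareto front without an appeal to transitivity and finiteness of the solution space). The one point you must not skip when writing the induction out in full is the coverage half of the invariant: ``non-dominated among those generated so far'' does not by itself yield membership in the true Pareto front, so the argument that every discarded tuple's extensions remain dominated by the retained tuples' extensions is load-bearing, not optional.
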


\noindent
The search space of this algorithm is exponential in terms of the number 
of services required to serve a query. This limits its scalability to large service repositories. 
In the next subsection, we propose two scalable heuristics. 
\section{A Heuristic Approach}
\noindent
We first discuss the limitation of the solution discussed in the previous subsection. It is easy to see that
Step \ref{step:predecessor} of Algorithm \ref{algo:graphConversion}, where the set of predecessors of a node 
is constructed, may explode. Consider the following example.
\begin{example}
Consider a node $v^{p}_i$ of LPG requires 10 inputs and each input is provided by 10 nodes of the 
dependency graph. The number of possible predecessor nodes of $v^{p}_i$ is $10^{10}$. 
 \hfill$\blacksquare$
\end{example}

\noindent
If the number of inputs of a node or the number of nodes providing an input, increases, the number 
of predecessor nodes also increases exponentially.
In our heuristic, we try to address the above issue. The main motivation of this algorithm is 
to reduce the search space of the original problem. On one hand, we attempt to reduce the number of combinations
generated at Step \ref{step:predecessor} of Algorithm \ref{algo:graphConversion}. On the other hand, we try
to restrict the number of nodes generated at a particular level of the LPG. 
Our approach is based on the notion of anytime algorithms\cite{yan2015anytime} using beam search. Beam search uses breadth-first search to build its search space. However, at each level of the graph, we store only a fixed number of nodes, called the beam width. The greater the beam width, the fewer the number of nodes pruned. 

While constructing $G_P$, all predecessor nodes of the set of nodes at a particular level are computed, as earlier. However, only a subset of the nodes is stored depending on the beam width of the algorithm. Consider $\{v^{(p)}_{i_1}, v^{(p)}_{i_2}, \ldots, v^{(p)}_{i_k}\} \subset V_P$ be the set of nodes generated at level $i$ and the beam width is $k_1 \le k$. Therefore, only $k_1$ out of $k$ nodes are stored. The nodes are selected based on the values of the cumulative Pareto optimal tuples computed till $v^{(p)}_{i_j}$, for $j = 1, 2, \ldots, k$. 
At each level, the selected set of nodes is ranked between $1, 2, \ldots, k_1$. A node with rank $i$ has higher priority than a node with rank $j$, where $j > i$. Consider $l$ be the number of levels in the dependency graph, where the level $l$ consists of $V_{End}$. The selection criteria for choosing $k_1$ nodes from $(l - 1)^{th}$ level is discussed below.

\begin{itemize}
 \item The feasible non dominated tuples $V_{(l-1)}^*$ corresponding to the  cumulative Pareto optimal tuples 
	computed till each $v^{(p)}_{{(l-1)}_j}$, for $j = 1, 2, \ldots, k$ are computed first.
 \item If $|V_{(l-1)}^*| = k_1$, $V_{(l-1)}^*$ is returned.
 \item If $|V_{(l-1)}^*| > k_1$, the following steps are performed:
 \begin{itemize}
  \item The utility $U(t_j)$ corresponding to each tuple $t_j \in V_{(l-1)}^*$
	is computed as follows:
  {\scriptsize{
  \begin{equation}
    A_k = Max_{t_j \in V_{(l-1)}^*}{\cal{P}}_k^{(t_j)}
  \end{equation}
  \begin{equation}
    B_k = Min_{t_j \in V_{(l-1)}^*}{\cal{P}}_k^{(t_j)}
  \end{equation}
  \begin{equation}
    NV({\cal{P}}_k^{(t_j)}) = \left\{
  \begin{array}{ll}
      \frac{A_k - {\cal{P}}_k^{(t_k)}}{A_k - B_k} & ,\text{for a negative QoS} \\
      \frac{{\cal{P}}_k^{(t_j)} - B_k}{A_k - B_k} & ,\text{for a positive QoS} \\
      1 & ,A_k = B_k \\
\end{array} 
\right.
\end{equation}
\begin{equation}\label{equ:utility}
   U(t_j) = \sum_{{\cal{P}}_k \in t_j} NV({\cal{P}}_k^{(t_j)})
\end{equation}
}}where $NV({\cal{P}}_k^{(t_j)})$ is the normalized value of ${\cal{P}}_k$ for $t_j \in V_{(l-1)}^*$.

  \item Tuples in $V_{(l-1)}^*$ are sorted in descending order based on utility and finally, the first $k_1$ 
	tuples are chosen, which are returned.
 \end{itemize}
  \item If $|V_{(l-1)}^*| < k_1$, $(k_1 - |V_{(l-1)}^*|)$ more tuples are chosen from $\{v^{(p)}_{{(l-1)}_1}, v^{(p)}_{{(l-1)}_2}, \ldots, v^{(p)}_{{(l-1)}_k}\} \setminus |V_{(l-1)}^*|$ using the same procedure discussed above.
  \item Finally, the selected tuples are ranked based on their utility value. The rank of the tuple with the highest utility value is set to 1.
\end{itemize}

\noindent
Consider ${\cal{TP}}_{(i + 1)} = \{t_{(i + 1)_1}, t_{(i + 1)_2}, \ldots, t_{(i + 1)_{k'}}\}$ be the set of tuples selected at level $(i + 1)$ of $G_P$. It may be noted, $|{\cal{TP}}_{(i + 1)}| = k' \le k_1$. Without loss of generality, we assume that the priority of $t_{(i + 1)_{j1}}$ is greater than that of $t_{(i + 1)_{j2}}$, where $j_1 < j_2$. Also consider $\{v^{(p)}_{(i + 1)_1}, v^{(p)}_{(i + 1)_2}, \ldots, v^{(p)}_{(i + 1)_{k''}}\} \subset V_P$ be the set of nodes corresponding to the tuples in ${\cal{TP}}_{(i + 1)}$. We further assume that at least one tuple corresponding to $v^{(p)}_{(i + 1)_{j1}}$ has higher priority than any tuple corresponding to $v^{(p)}_{(i + 1)_{j2}}$, where $j_1 < j_2$. 
We now discuss the selection criteria for choosing $k_1$ nodes from level $i$, where $i < (l - 1)$:
\begin{itemize}
 \item A cumulative Pareto optimal tuple $t$ corresponding to a node $v$ is selected for the rank position 1, if the following conditions are satisfied:
 \begin{itemize}
  \item $v$ belongs to the set of predecessor nodes of $v^{(p)}_{(i + 1)_1}$.
  \item $t$ has the highest utility value among all the cumulative Pareto optimal tuples corresponding to all the predecessor nodes of $v^{(p)}_{(i + 1)_1}$. 
 \end{itemize}
 \item In general, a cumulative Pareto optimal tuple $t$ corresponding to a node $v$ is selected for the rank position $j$, where $j \le k'$, if:
 \begin{itemize}
  \item $v$ belongs to $\cup_{x=1}^{j} Pred(Corr(t_{(i + 1)_x}))$, where $Corr(t_{(i + 1)_x})$ is the node corresponding to $t_{(i + 1)_x}$ and $Pred(Corr(t_{(i + 1)_x}))$ is the set of predecessor nodes of $Corr(t_{(i + 1)_x})$.
  \item $t$ has the highest utility value among all the cumulative Pareto optimal tuples corresponding to $\cup_{x=1}^{j} Pred(Corr(t_{(i + 1)_x})) \setminus \{\text{set of already selected tuples}\}$.
 \end{itemize}
 \item A cumulative Pareto optimal tuple $t$ corresponding to a node $v$ is selected for rank $j$ ($k' \le j \le k_1$), if:
 \begin{itemize}
  \item $t$ has the highest utility value among all the cumulative Pareto optimal tuples corresponding to $\cup_{x=1}^{k''} Pred(v^{(p)}_{(i + 1)_x}) \setminus \{\text{set of already selected tuples}\}$.
 \end{itemize}
\end{itemize}

\noindent
Clearly, the search space of the algorithm is determined by the beam width.
The main motivation of this algorithm is to be able to improve the solution quality monotonically with the increase in beam width. The selection procedure enforces this criteria.
As the beam size increases, the number of pruned nodes decreases and the solution quality of the algorithm
either remains same or improves, as formally stated below.

\begin{lemma}
The solution quality of the heuristic algorithm monotonically improves with increase in beam width.
 \hfill$\blacksquare$
\end{lemma}


\begin{lemma}\label{lemma:AnytimeToOptimal}
With an infinite beam width, the algorithm is identical to the Pareto optimal algorithm.
\hfill$\blacksquare$
\end{lemma}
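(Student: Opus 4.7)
The plan is to proceed by backward induction on the levels of the LPG, showing that with an unbounded beam width, at every level the set of nodes retained by the heuristic coincides with the set of nodes constructed by Algorithm \ref{algo:graphConversion}, and consequently the cumulative Pareto fronts attached to each node coincide as well. The base case is the last level, which in both procedures consists of the single dummy node $V^{(p)}_{End}$ with the trivial tuple initialized to the best value for each QoS parameter. The inductive hypothesis is that at level $(i+1)$ the heuristic and Algorithm \ref{algo:graphConversion} have produced the same set of nodes carrying the same cumulative Pareto fronts ${\cal{CP}}$.

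For the inductive step, recall that the heuristic constructs all predecessor combinations at level $i$ exactly as Algorithm \ref{algo:graphConversion} does; the only difference is the subsequent beam-width filter. With $k_1 = \infty$, the selection rule specializes in the following way: the condition $|V_i^*| > k_1$ can never fire, so the sort-and-truncate branch is inactive; the fallback branch $|V_i^*| < k_1$ supplements tuples only until the bound $k_1$ is reached, which with $k_1 = \infty$ means every tuple computed at the level is retained. Hence no node and no tuple is discarded by the heuristic at level $i$, and the retained set equals the full set produced by Algorithm \ref{algo:graphConversion}. The ranking step is still executed, but because nothing is pruned downstream, the ordering has no effect on which nodes are visited at level $(i-1)$.

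Once node-set equality is established at every level, the per-node QoS bookkeeping is identical too: the non-dominated tuple set ${\cal{P}}^{''(j)}$ is determined solely by the underlying component nodes of $v^{(p)}_j$ (which match by induction), and the cumulative Pareto front ${\cal{CP}}^{(j)}$ is obtained from the same combine-with-successor-and-filter rule in both algorithms. The global constraint checks are pointwise, so they drop the same tuples in both runs. Therefore the Pareto front finally materialized at $V^{(p)}_{Start}$ is the same set of feasible non-dominated QoS tuples in both procedures, so the outputs coincide.

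The main obstacle I expect is the careful treatment of the selection rule in the limit $k_1\to\infty$: one must verify that the three-branch case analysis on $|V_{(l-1)}^*|$ collapses to the equality branch in which nothing is removed, and that the utility-based ranking, although still computed, is never actually used to prune at a deeper level. A related subtlety is the clause that restricts predecessor selection to $\bigcup_{x=1}^{j}\mathrm{Pred}(\mathrm{Corr}(t_{(i+1)_x}))$: under infinite beam width the index $j$ ranges over \emph{all} surviving tuples at level $(i+1)$, so this union equals the full predecessor set, matching Algorithm \ref{algo:graphConversion} exactly. Handling this bookkeeping cleanly is the only nontrivial part; everything else reduces to the inductive propagation sketched above.
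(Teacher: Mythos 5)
Your argument is correct and follows essentially the same route as the paper, which justifies this lemma only with a brief remark: since each level of $G_P$ contains finitely many nodes, an infinite (or sufficiently large) beam width means the selection step never prunes anything, so the heuristic coincides with the optimal algorithm. Your backward induction is simply a more careful elaboration of that same observation.
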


\noindent
Since at each level, a finite number of nodes are generated, the above lemma holds. 
Moreover, if the beam width of the heuristic algorithm is greater than or equal to the 
maximum number of nodes belonging to a level of $G_P$, where $G_P$ represents the complete LPG 
constructed by the optimal algorithm, no nodes are required to be pruned in this algorithm 
and thereby, the algorithm is identical to the optimal one.

\section{Solution Generation Using NSGA}\label{sec:ga}
\noindent
In this section, we present a different approach based on the non-dominated sorting genetic algorithm (NSGA) \cite{deb2002fast}. While the previous algorithms are deterministic algorithms, this is a randomized algorithm. This algorithm is basically an adaptive heuristic search algorithm based on the evolutionary ideas of natural selection and genetics \cite{srinivas1994genetic}. 

In this algorithm, a population of candidate solutions, called phenotypes \cite{srinivas1994genetic}, to a query is evolved toward better solutions. Each candidate solution is encoded into a binary string, called chromosome or genotype, which can be mutated and altered. The algorithm starts from a population of randomly generated chromosomes. In each iteration (called a generation) of the algorithm, a new set of chromosomes are generated and the fitness of every chromosome in the population is evaluated. The fitness value of a chromosome usually refers to the value of the objective function in the optimization problem being solved. In this paper, the fitness value of a chromosome is computed (as discussed later) from the QoS values of its corresponding phenotype. The more fit chromosomes are selected from the current population to form the next generation. Each chromosome in the current population is modified using different genetic operators (namely crossover, mutation) to construct its off-string for the next generation. The algorithm terminates after a fixed number of iterations, which is provided externally.

Once a query comes to the system, the dependency graph is constructed first. The dependency graph contains all possible solutions to the query. The genetic algorithm is applied on the dependency graph to generate high-quality solutions. Before discussing the details of the algorithm, we first define the notion of a chromosome used in this paper.

\begin{definition}{\em [\bf Chromosome:]}
 A chromosome is a binary string ${\cal{B}} = b_1 b_2 \ldots b_n$, where each bit $b_i \in \{b_1, b_2, \ldots, b_n\}$ of the string represents a node $v_i \in V$ in the dependency graph ${\cal{D}} = (V, E)$ constructed in response to a query and is defined as:
 $b_i = \left\{
 \begin{array}{ll}
  1 & \text{if } v_i \text{ is present in the solution sub-graph }\\
  0 & \text{otherwise }
 \end{array}
 \right. $
 \hfill$\blacksquare$
\end{definition}

\noindent
It may be noted, each binary string is not a valid solution in terms of functional dependency (e.g., the string of all $0$'s). Therefore, in our algorithm, we consider only those chromosomes which represent a valid solution to a query in terms of functional dependency. Later in this section, we present an algorithm for constructing a chromosome. In the next subsection, we present an overview of our algorithm.

\subsection{Algorithm to generate Pareto optimal solution}
\noindent
Our algorithm has the following steps. 

\begin{enumerate}
 \item {\bf{Initial population construction}}: First $n$ chromosomes are chosen randomly to construct the initial population.
 \item {\bf{Iteration of the algorithm}}: In each iteration of the algorithm, a new generation is created. The algorithm terminates after a fixed number of iterations $k$, where $k$ is provided externally. The following steps are performed in each iteration of the algorithm.
 \begin{enumerate}
  \item \label{step:selection} {\bf{Selection of parent chromosomes}}: Two chromosomes are chosen randomly from the current population using Roulette wheel selection \cite{srinivas1994genetic}. 
  \item \label{step:crossover} {\bf{Crossover}}: With probability $p_c$ (i.e., crossover probability), the crossover operation is performed between the two chromosomes selected in the earlier step to generate two off-springs. Steps \ref{step:selection} and \ref{step:crossover} are performed $n$ number of times.
  \item {\bf{Mutation}}: Each off-spring, generated in the earlier step, is mutated to generate a new off-spring.
  \item {\bf{Fitness value computation}}: The fitness value for each chromosome in the population is computed. 
  \item {\bf{Construction of new generation}}: The population now consists of more than $n$ chromosomes. Based on the fitness value of the chromosomes in the population, the best $n$ chromosomes are selected from the population and the rest of the chromosomes are removed.
 \end{enumerate}
 \item {\bf{Solution construction}}: Once the algorithm terminates, the best chromosomes, obtained from the final population, are returned as the solutions of our algorithm.
\end{enumerate}

\noindent
 We now briefly demonstrate each step of this algorithm.

\subsection{Chromosome Construction}
\noindent
To construct a chromosome, we traverse the dependency graph backward starting from $V_{End}$. Each bit of a chromosome is first initialized by 0. During the traversal of the dependency graph, whenever a node of the dependency graph is visited, the corresponding bit in the chromosome is marked as 1. 

\begin{itemize}
 \item We start from $V_{End}$ and mark the corresponding bit in chromosome as 1.
 \item For each input $io$ of a visited node $v_i$, we randomly select a node $v_j$ that produces $io$ as output and mark the corresponding bit for $v_j$ as 1.
 \item Each node is processed only once. In other words, if a node $v_i$ is encountered more than once, we do not reprocess the node.
 \item The algorithm terminates when there is no node left for processing.
\end{itemize}

\noindent
It may be noted, once the algorithm terminates, we get a valid solution in terms of functional dependency corresponding to the generated chromosome. 

\begin{example}
 Consider the example shown in Figure \ref{fig:chromosome}. Figure \ref{fig:chromosome}(a) shows a dependency graph constructed in response to a query, while Figure \ref{fig:chromosome}(b) presents an example candidate solution (i.e., phenotype) constructed from the dependency graph. 
 \begin{figure}[!htb]
  \centering
   \includegraphics[width=\linewidth]{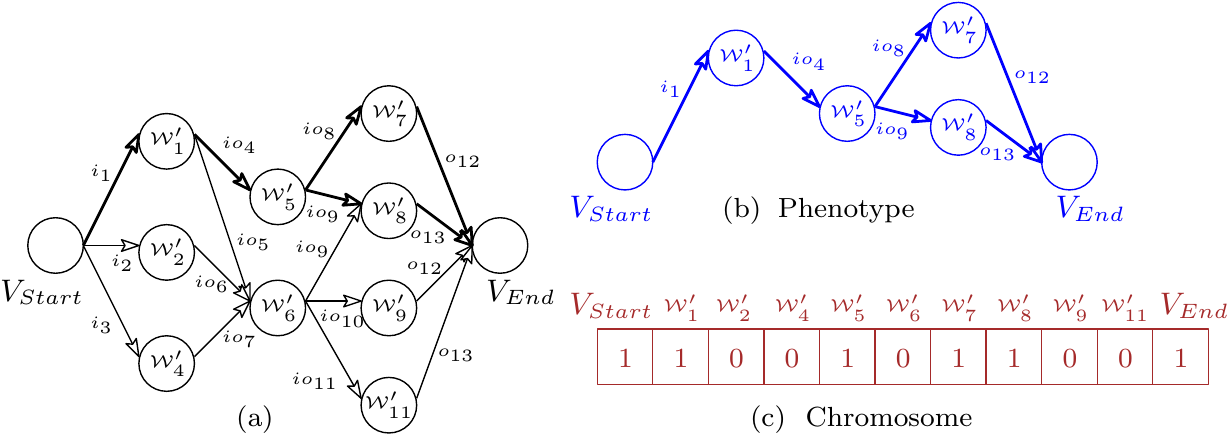}
  \caption{Example of a chromosome}
  \label{fig:chromosome}
 \end{figure}
 As shown in the figure, $V_{End}$ takes two inputs $o_{12}$ and $o_{13}$. For each input, one node is randomly selected (as shown in the figure by bold line). The algorithm terminates when there is no node left for processing. Figure \ref{fig:chromosome}(c) represents the chromosome corresponding to the phenotype shown in Figure \ref{fig:chromosome}(b).
 \hfill$\blacksquare$
\end{example}

\noindent
We now discuss the two main operators of the genetic algorithm, namely, crossover and mutation.

\subsection{Crossover}
\noindent
Crossover is a binary genetic operator used to obtain chromosomes from one generation to the next. In this paper, we consider only single point crossover \cite{srinivas1994genetic}. However, the crossover mechanism applied here is different from the normal single point crossover, in order to ensure that the off-springs generated after the crossover operation are valid solutions in terms of functional dependency.

Consider two parent chromosomes $CR_1$ and $CR_2$ selected to participate in a crossover. Also consider $D_{CR1} = (V_{CR1}, E_{CR1})$ and $D_{CR2} = (V_{CR2}, E_{CR2})$ are two subgraphs (i.e., phenotypes) of the dependency graph $D$ corresponding to $CR_1$ and $CR_2$ respectively.

The main intuition of this step is to randomly choose a common node $v_i$ between $D_{CR1}$ and $D_{CR2}$ at first and then exchange the set of nodes in $D_{CR1}$ and $D_{CR2}$ that are responsible to activate $v_i$ for both $D_{CR1}$ and $D_{CR2}$. In this way we ensure to obtain functionally valid off-springs. We now formally demonstrate the crossover operation.

\begin{itemize}
 \item At first, the set of common nodes from $D_{CR1}$ and $D_{CR2}$ are identified. Out of them, one common node $v_i$ is chosen randomly. 
 \item For each of the subgraphs $D_{CR1}$ and $D_{CR2}$, we compute the subgraphs $D'_{CR1} = (V'_{CR1}, E'_{CR1})$ containing all the paths from $V_{Start}$ to $v_i$ of $D_{CR1}$ and $D'_{CR2} = (V'_{CR2}, E'_{CR2})$ containing all the paths from $V_{Start}$ to $v_i$ of $D_{CR2}$.
 \item Finally, $D'_{CR1}$ and $D'_{CR2}$ are exchanged to generate two new off-springs $D_{CR3} = (V_{CR3}, E_{CR3})$ and $D_{CR4} = (V_{CR4}, E_{CR4})$, where, \\
 {\small $V_{CR3} = (V_{CR1} \setminus \{v\}) \cup V'_{CR2}$}, where $v \in V'_{CR1}$ and $v$ does not belong to any path other than $V_{Start}$ to $v_i$ in $E_{CR1}$;\\
 {\small $E_{CR3} = (E_{CR1} \setminus \{e\}) \cup E'_{CR2}$}, where $e \in E'_{CR1}$ and $e$ does not belong to any path other than $V_{Start}$ to $v_i$ in $E_{CR1}$; \\
 {\small $V_{CR4} = V'_{CR1} \cup (V_{CR2} \setminus \{v\})$}, where $v \in V'_{CR2}$ and $v$ does not belong to any path other than $V_{Start}$ to $v_i$ in $E_{CR2}$; \\
 {\small $E_{CR4} = E'_{CR1} \cup (E_{CR2} \setminus \{e\})$}, where $e \in E'_{CR2}$ and $e$ does not belong to any path other than $V_{Start}$ to $v_i$ in $E_{CR2}$.
\end{itemize}

\noindent
Each off-spring is generated after scanning two subgraphs $D_{CR1}$ and $D_{CR2}$ once. For example, during the construction of $D_{CR3}$, $D_{CR1}$ is traversed backward starting from $V_{End}$. Once a node of $D_{CR1}$ is encountered, the node is copied in $D_{CR3}$. For each node $v_j$ (other than $v_i$) in $D_{CR3}$, all the nodes that provide at-least one input to $v_j$ in $D_{CR1}$ are copied in $D_{CR3}$. Once the traversal of $D_{CR1}$ is done, the traversal of $D_{CR2}$ starts backward starting from $v_i$. Similar to the previous, once a node of $D_{CR2}$ is encountered, the node is copied in $D_{CR3}$. For each node $v_j$ in $D_{CR3}$, all the nodes that provide at-least one input to $v_j$ in $D_{CR2}$ are copied in $D_{CR3}$.

\begin{example}
Consider two subgraphs of the dependency graph corresponding to two parent chromosomes as shown in Figure \ref{fig:crossover}(a) and (b). The node corresponding to ${\cal{W}}'_{12}$ is chosen as the common node. Finally, the off-springs generated using the above procedure are shown in Figure \ref{fig:crossover}(c) and (d).
 \begin{figure}[!htb]
  \centering
   \includegraphics[width=\linewidth]{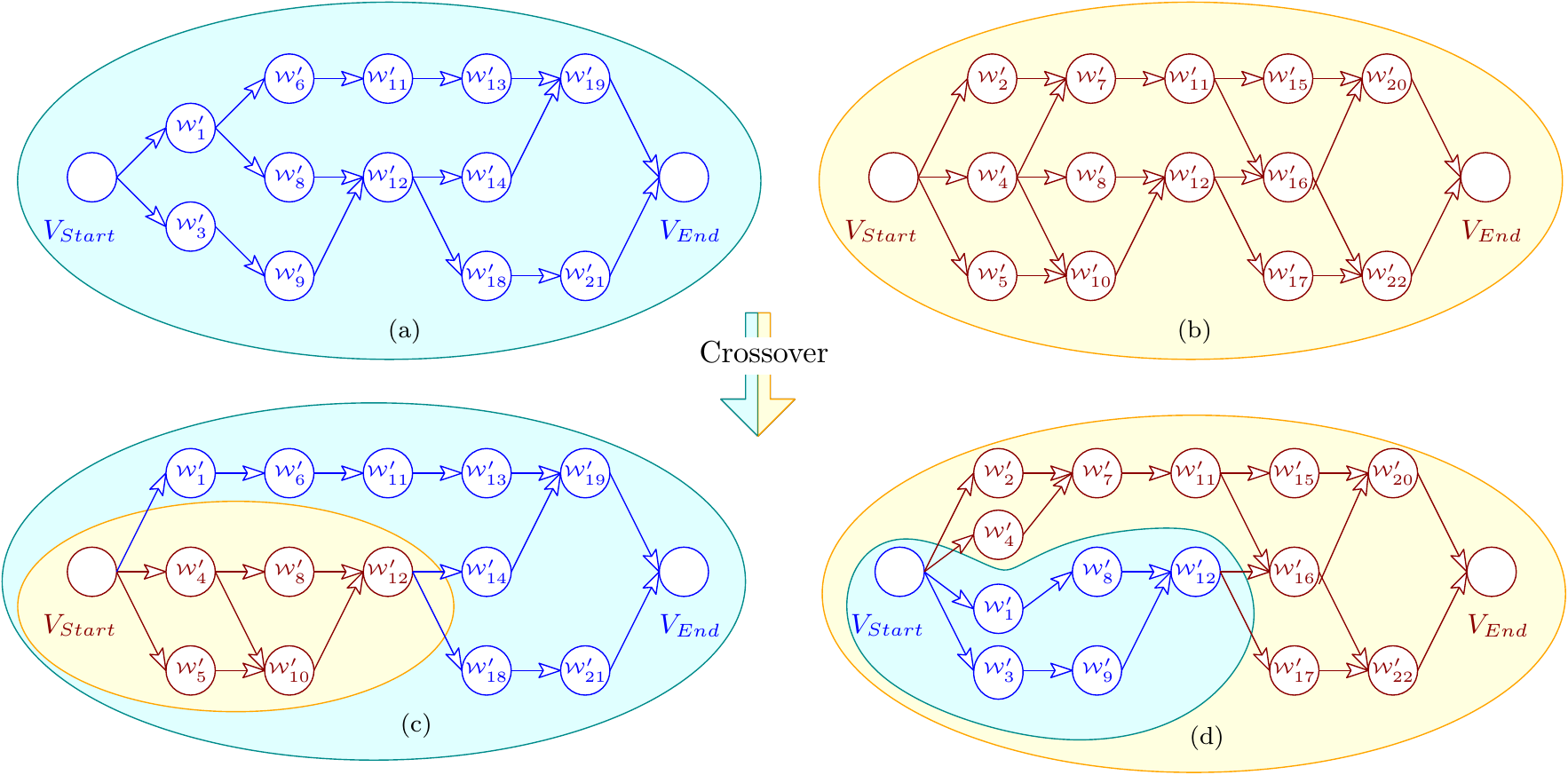}
  \caption{Example of crossover}
  \label{fig:crossover}
 \end{figure}
 \hfill$\blacksquare$
\end{example}

 
\subsection{Mutation}
\noindent
Mutation is a unary genetic operator used to obtain a chromosome from one generation to the next. Like crossover, mutation is also performed in a different way than normal mutation operation in our approach to ensure functional dependency preserving solutions are generated.

Consider a parent chromosome $CR_1$ to participate in mutation. Also consider $D_{CR1} = (V_{CR1}, E_{CR1})$ is the subgraph of the dependency graph $D$ corresponding to $CR_1$ and $p_m$ is the mutation probability. 

The intuition of this step is as follows. We traverse $D_{CR1}$ in backward. While traversing, with probability $p_m$ we choose a node $v_i$ to be mutated. If a node $v_j$ is selected instead of $v_i$ after mutation, we randomly generate the nodes that are needed to activate $v_j$ from the query inputs. In this manner, we ensure to obtain a functionally valid off-spring. We now formally discuss the mutation operation.

\begin{itemize}
 \item We start traversing $D_{CR1}$ backward starting from $V_{End}$ to generate a new off-spring $D_{CR2} = (V_{CR2}, E_{CR2})$. We first add $V_{End}$ in $V_{CR2}$.
 \item For each input $io$ of a node $v_i \in V_{CR2}$, if there is only one node $v_j \in V$ in the dependency graph $D$ to produce $io$ as output, we add $v_j$ in $V_{CR2}$ and $(v_j, v_i)$ in $E_{CR2}$.
 \item For each input $io$ of a node $v_i \in V_{CR2}$, if $v_i \in V_{CR1}$ and there exists multiple nodes in the dependency graph $D$ to produce $io$ as output, with probability $(1 - p_m)$ we add $v_j$ in $V_{CR2}$, where $(v_j, v_i) \in E_{CR1}$ and $v_j$ produces $io$ as output. We also add $(v_j, v_i)$ in $E_{CR2}$.
 \item For each input $io$ of a node $v_i \in V_{CR2}$, if $v_i \in V_{CR1}$ and there exists multiple nodes in the dependency graph $D$ to produce $io$ as output, with probability $p_m$, we perform the following operation: we randomly choose a node $v_j$ from $D$ such that $(v_j, v_i) \notin E_{CR1}$ and $v_j$ produces $io$. We add $v_j$ to $V_{CR2}$ and $(v_j, v_i)$ to $E_{CR2}$.
 \item For each input $io$ of a node $v_i \in V_{CR2}$, if $v_i \notin V_{CR1}$, we randomly select a node $v_j$ that produces $io$ as output and add $v_j$ to $V_{CR2}$ and $(v_j, v_i)$ to $E_{CR2}$.
 \item Each node in $V_{CR2}$ is processed only once. In other words, if a node $v_i$ is encountered more than once, we do not reprocess the node.
 \item The algorithm terminates when there is no node left for processing.
\end{itemize}

\begin{example}
 Consider the dependency graph shown in Figure \ref{fig:mutation}(a) and the subgraph of the dependency graph corresponding to a chromosome shown in Figure \ref{fig:mutation}(b). In the figure, for input $io_9$ of the node corresponding to ${\cal{W}}'_8$, the mutation operation is performed. Finally, the off-spring generated using the above procedure is shown in \ref{fig:mutation}(c).
 \begin{figure}[!htb]
  \centering
   \includegraphics[width=\linewidth]{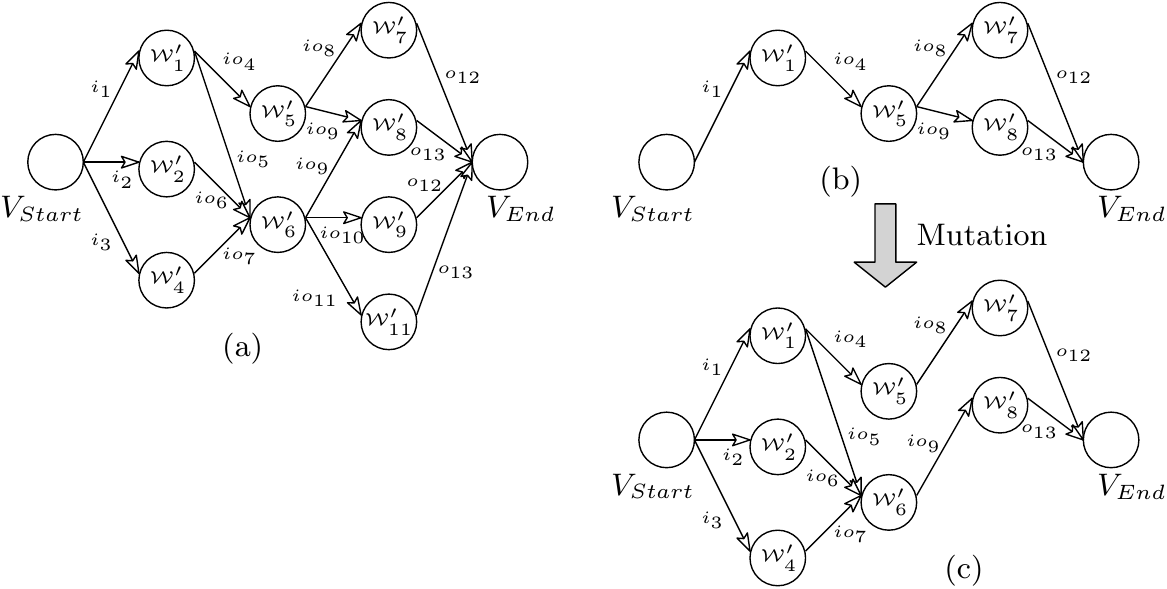}
  \caption{Example of mutation}
  \label{fig:mutation}
 \end{figure}
 \hfill$\blacksquare$
\end{example}


\subsection{Fitness function}
\noindent
In this section, we discuss the computation of the fitness function for each chromosome in the population. For each chromosome in the population, we first compute the QoS tuple. The chromosomes are then divided into different levels based on domination. A chromosome $CR_i$ belongs to level $l_j$, if there exists at-least one chromosome in each level $l_k$, where $k < j$, that dominates $CR_i$ and no chromosome belonging to level $l_m$, where $m \ge j$ dominates $CR_i$. For chromosomes belonging to the same level, we compute the crowding distance $Distance (CR_i)$ as illustrated in \cite{deb2002fast}. Crowding distance ensures solution diversity. This essentially measures how close the individual chromosome is from its neighbors with respect to each QoS parameter. 
Finally, the chromosomes are ranked as follows:

Between two chromosomes $CR_i$ and $CR_j$, rank of $CR_i$ is higher than the rank $CR_j$ if $(Level (CR_i) < Level (CR_j))$ or ($(Level (CR_i) == Level (CR_j))$ and $(Distance (CR_i) > Distance (CR_j))$). The rank of a chromosome is treated as the fitness value of the chromosome. Less fitness value represents better solution quality. 

When algorithm terminates, all the phenotypes corresponding to the chromosomes with level 0 are returned as the solutions. In the next section, we experimentally shows the comparative study of our approaches.

\section{Experimental Results}\label{sec:result}
\noindent
We implemented our proposed framework in Java. Experiments were performed on an i3 processor with 4GB RAM. The algorithms were evaluated on the 19 public repositories of the ICEBE-2005 Web Service Challenge \cite{icebe2005} and the 5 public repositories of the 2009-2010 Web Service Challenge (WSC 2009-10) \cite{bansal2009wsc} and the dataset demonstrated in \cite{7933195}.
The WSC 2009-10 dataset contains only the values of two QoS parameters (response time and throughput) for each service. Additionally, we randomly generated the values for reliability and availability for each service. However, the ICEBE-2005 dataset does not contain the value of any QoS parameter for the services. We randomly generated values of all 4 QoS parameters (response time, throughput, reliability and availability) for our experiments with the ICEBE-2005 dataset.

\noindent
{\textbf{Configurations of our algorithms}}: No configuration parameter is required for the optimal algorithm. For our first heuristic approach, we varied the beam width from 100 to 500. Finally, we compare our first heuristic with \cite{yan2012anytime} considering the beam width as 500. For our second heuristic algorithm, we chose the following configuration parameters: 
population size as 100, the mutation probability as 0.01 and crossover probability as 0.85 and the number of iterations as 10000.

\begin{table*}[!ht]
\scriptsize
\caption{\scriptsize{Comparison between solutions with increasing beam width for WSC-2009 (Case $i$ computed with respect to Case $i + 1$ )}}
\centering
\begin{tabular}{c||c|c|c|c|c||c|c|c|c|c||c|c}
\hline
&\multicolumn{12}{c}{Beam Width}\\
\cline{2-13}
Data Set  & \multicolumn{5}{c||}{Case 1: 100} & \multicolumn{5}{c||}{Case 2: 300} & \multicolumn{2}{c}{Case 3: 500} \\
\cline{2-13}
& n & ${\cal{AD}}$ & $\cal{CR}$ & $\cal{CN}$ & S & n & ${\cal{AD}}$ & $\cal{CR}$ & $\cal{CN}$ & S & n & Computation Time (sec) \\
 \hline
WSC-01 & 1    & 0.86 & 0.5 & 0.5 & 5.83 & 2 & 1 & 1 & 1 & 7.86 & 2 & 8\\ \hline
WSC-02 & 2    & 1 & 1 & 1 & 8.69 & 2 & 1 & 1 & 1 & 9.32 & 2 & 11 \\ \hline
WSC-03 & 2    & 0.79 & 0.33 & 0.5 & 21.71 & 2 & 1 & 1 & 1 & 7.86 & 2 & 126 \\ \hline
WSC-04 & 2    & 0.93 & 0.4 & 0.4 & 29.74 & 3 & 0.81 & 0.33 & 0.4 & 18.86 & 5 & 397 \\ \hline
WSC-05 & 3    & 0.89 & 0.5 & 0.5 & 37.21 & 5 & 0.98 & 0.83 & 0.83 & 11.86 & 6 & 778\\
\end{tabular}\label{tab:compareAnyTimeWSC}
\end{table*}

\begin{table*}[!ht]
\scriptsize
\caption{\scriptsize{Comparison between solutions with increasing beam width for ICEBE-2005 (Case $i$ computed with respect to Case $i + 1$ )}}
\centering
\begin{tabular}{c||c|c|c|c|c||c|c|c|c|c||c|c}
\hline
&\multicolumn{12}{c}{Beam Width}\\
\cline{2-13}
Data Set  & \multicolumn{5}{c||}{Case 1: 100} & \multicolumn{5}{c||}{Case 2: 300} & \multicolumn{2}{c}{Case 3: 500} \\
\cline{2-13}
& n & ${\cal{AD}}$ & $\cal{CR}$ & $\cal{CN}$ & S & n & ${\cal{AD}}$ & $\cal{CR}$ & $\cal{CN}$ & S & n & Computation Time (sec) \\
 \hline
Out Composition     & 3    & 1 & 1 & 1 & 3.89   & 3 & 1    & 1 & 1 & 4.39    & 3 & 0.51 \\ \hline
Composition1-20-4   & 1    & 0.92 & 0.25 & 0.25 & 1.63   & 4 & 0.88 & 0.57 & 0.57 & 7.36    & 7 & 2 \\ \hline
Composition1-20-16  & 2    & 0.97 & 0.4 & 0.4 & 9.85   & 5 & 0.96 & 0.67 & 0.8 & 6.13    & 5 & 7 \\ \hline
Composition1-20-32  & 4    & 0.91 & 0.29 & 0.4 & 13.12  & 5 & 0.99 & 0.5 & 0.75 & 13.15   & 4 & 11 \\ \hline
Composition1-50-4   & 5    & 0.84 & 0.62 & 0.62 & 17.81  & 8 & 0.76 & 0.89 & 0.89 & 21.15   & 9 & 10 \\ \hline
Composition1-50-16  & 5    & 0.8 & 0.4 & 0.5 & 5.46   & 4 & 0.82 & 0.25 & 0.33 & 9.76    & 6 & 18 \\ \hline
Composition1-50-32  & 1    & 0.75 & 0.5 & 0.5 & 9.81   & 2 & 1    & 1 & 1 & 18.71   & 2 & 17 \\ \hline
Composition1-100-4  & 6    & 0.94 & 0.56 & 0.62 & 6.75   & 8 & 0.9  & 0.73 & 0.73 & 3.42    & 11 & 38 \\ \hline
Composition1-100-16 & 2    & 0.9 & 0.33 & 0.33 & 2.96   & 6 & 0.84 & 0.4 & 0.5 & 11.89   & 8 & 63 \\ \hline
Composition1-100-32 & 2    & 1 & 1 & 1 & 8.49   & 2 & 0.89 & 0.67 & 0.67 & 5.39    & 3 & 69 \\ \hline
Composition2-20-4   & 5    & 0.84 & 0.43 & 0.33 & 13.17  & 7 & 0.87 & 0.56 & 0.71 & 11.16   & 7 & 51 \\ \hline
Composition2-20-16  & 3    & 0.83 & 0.6 & 0.6 & 6.19   & 5 & 0.69 & 0.17 & 0.22 & 4.96    & 9 & 308\\ \hline
Composition2-20-32  & 7    & 0.9 & 0.54 & 0.54 & 19.1   & 13& 0.93 & 0.33 & 0.54 & 13.59   & 11 & 97 \\ \hline
Composition2-50-4   & 9    & 0.69 & 0.27 & 0.6 & 11.31  & 5 & 0.79 & 0 & 0 & 8.97    & 2 & 241 \\ \hline
Composition2-50-16  & 2    & 0.99 & 0.67 & 0.67 & 4.87   & 3 & 0.91 & 0.37 & 0.37 & 9.36    & 8 & 306 \\ \hline
Composition2-50-32  & 3    & 1 & 1 & 1 & 2.91   & 3 & 1    & 1 & 1 & 12.82   & 3 & 531 \\ \hline
Composition2-100-4  & 4    & 0.96 & 0.8 & 0.8 & 8.95   & 5 & 0.91 & 0.83 & 0.83 & 27.26   & 6 & 649 \\ \hline
Composition2-100-16 & 5    & 0.83 & 0.57 & 0.5 & 21.37  & 7 & 0.8  & 0.62 & 0.5 & 9.86    & 8 & 1339 \\ \hline
Composition2-100-32 & 2    & 0.91 & 0.33 & 0.33 & 36.59  & 6 & 0.77 & 0.22 & 0.4 & 3.59    & 5 & 1689 \\
\end{tabular}\label{tab:compareAnyTimeICEBE}
\end{table*}

\subsection{Different metrics to compare results}
\noindent
We use the following metrics to compare our results with other approaches in literature, as defined below.
\begin{itemize}
 \item $n$: The cardinality of the solutions obtained from an algorithm.
 \item Commonality Ratio ${\cal{CR}}({\cal{\hat{T}}}_1, {\cal{\hat{T}}}_2)$: Given two sets of solution tuples ${\cal{\hat{T}}}_1$ and ${\cal{\hat{T}}}_2$, 
 {\scriptsize{\[
 {\cal{CR}}({\cal{\hat{T}}}_1, {\cal{\hat{T}}}_2) = \frac{|{\cal{\hat{T}}}_1 \cap {\cal{\hat{T}}}_2|}
 {|{\cal{\hat{T}}}_1 \cup {\cal{\hat{T}}}_2|}
 \]}}
 \item Commonality Non Dominated Solution Ratio ${\cal{CN}}({\cal{\hat{T}}}_1, {\cal{\hat{T}}}_2)$: Given two sets of solution tuples ${\cal{\hat{T}}}_1$ and ${\cal{\hat{T}}}_2$,  
 {\scriptsize{\[
 {\cal{CN}}({\cal{\hat{T}}}_i, {\cal{\hat{T}}}_3) = \frac{|{\cal{\hat{T}}}_i \cap {\cal{\hat{T}}}_3|}{|{\cal{\hat{T}}}_3|}; i=1,2;
 \]}}
 ${\cal{\hat{T}}}_3 = $ set of non dominated tuples obtained from ${\cal{\hat{T}}}_1 \cup {\cal{\hat{T}}}_2$; 
 \item Average Distance Ratio ${\cal{AD}}({\cal{\hat{T}}}_1, {\cal{\hat{T}}}_2)$: Given two sets of solution tuples ${\cal{\hat{T}}}_1$ and ${\cal{\hat{T}}}_2$
 {\scriptsize{\[
 {\cal{AD}}({\cal{\hat{T}}}_1, {\cal{\hat{T}}}_2) = \frac{\frac{1}{|{\cal{\hat{T}}}_1|}\sum_{t_i \in {\cal{\hat{T}}}_1}U(t_i)}{\frac{1}{|{\cal{\hat{T}}}_2|}\sum_{t_i \in {\cal{\hat{T}}}_2}U(t_i)}
 \]}}
 $U(t_i)$ is calculated as in Eq. \ref{equ:utility} for all tuples in ${\cal{\hat{T}}}_1 \cup {\cal{\hat{T}}}_2$.
 \item Speed up $S(A_1, A_2)$: Given two algorithms $A_1$ and $A_2$,
 {\scriptsize{\[
 S(A_1, A_2) = \frac{\text{Computation time for } A_2}{\text{Computation time for } A_1}
 \]}}
\end{itemize}

\noindent
These metrics are used to compare the solutions obtained by two different algorithms or the same algorithm with different configurations. The {\em{average distance ratio}} metric shows the quality difference between two solutions. It may be noted, ${\cal{AD}}({\cal{\hat{T}}}_1, {\cal{\hat{T}}}_2) > 1$ means ${\cal{\hat{T}}}_1$ provides better result than ${\cal{\hat{T}}}_2$, while ${\cal{AD}}({\cal{\hat{T}}}_1, {\cal{\hat{T}}}_2) < 1$ means ${\cal{\hat{T}}}_2$ provides better result than ${\cal{\hat{T}}}_1$. The {\em{commonality ratio}} indicates how many tuples are common in two solutions, where as the {\em{commonality non dominated ratio}} shows how many non-dominated tuples are common.

\subsection{Analysis on a synthetic dataset}
\noindent
We first analyze the performance of our proposed algorithms on a synthetic dataset demonstrated in \cite{7933195}. Here we compare the performance of the heuristic methods with respect to the optimal one. The total number of services in the service repository was 567. We used the QWS \cite{Al-Masri:2007:DBW:1242572.1242795} dataset to assign the QoS values to the services. The QWS dataset has 8 different QoS parameters and more than 2500 services. From the QWS dataset, we randomly selected 567 services and the corresponding QoS values were assigned to the services in our repository. After preprocessing, we had only 164 different services. The Pareto optimal algorithm with preprocessing achieved $6.36$ times speed up in comparison to the Pareto optimal algorithm without any preprocessing.

The number of solutions obtained by the Pareto optimal algorithm ($PO$), our first heuristic ($H_1$) and our second heuristic ($H_2$) are 7, 8, 7 respectively. While $H_1$ generated 6 non dominated tuples, $H_2$ generated 5 non dominated tuples common with the tuples generated by $PO$. Average distance ratio between $PO$ and $H_1$ is 1.38, whereas, the average distance ratio between $PO$ and $H_2$ is 1.63. Our first heuristic method with beam width 500 achieved $31$ times speed up in comparison to the Pareto optimal algorithm with preprocessing, whereas, our second heuristic method achieved $43$ times speed up in comparison to the Pareto optimal algorithm with preprocessing. It may be noted, though our first heuristic algorithm generated better quality result with respect to our second heuristic algorithm, however, the second one provided higher speed up than our first one.

\subsection{Analysis on public datasets}
\noindent
We now show the experimental results obtained by our algorithm on ICEBE-2005 and WSC 2009-10.

\subsubsection{Results of Preprocessing}
\noindent
Given a service repository as an input to our problem, we first present the details of the reduction obtained in the number of services after the preprocessing step in our methodology on the benchmark datasets. Table \ref{tab:reduction} presents the summary on the ICEBE-2005 dataset. However, it is interesting to note that no reduction could be obtained for the WSC 2009-10 dataset, {\textcolor{black}{since the dataset does not contain any equivalent service.}}

\begin{table}[!ht]
\scriptsize
\caption{Reduction after preprocessing for ICEBE-2005}
\centering
\begin{tabular}{l|c|c|l|c|c}
  Data Set & $N_1$ &  $N_2$ & Data Set & $N_1$ & $N_2$ \\
 \hline
 Out Composition & 143 & 79  &
Composition1-20-4 & 2156 & 2032 \\ 
Composition1-20-16 & 2156 & 2143 &
Composition1-20-32 & 2156 & 2152 \\ 
Composition1-50-4 & 2656 & 2596  &
Composition1-50-16 & 2656 & 2652 \\ 
Composition1-50-32 & 2656 & 2656  &
Composition1-100-4 & 4156 & 4081 \\ 
Composition1-100-16 & 4156 & 4150 & 
Composition1-100-32 & 4156 & 4155 \\ 
Composition2-20-4 & 3356 & 3195  &
Composition2-20-16 & 6712 & 6678 \\ 
Composition2-20-32 & 3356 & 3347  &
Composition2-50-4 & 5356 & 5239 \\ 
Composition2-50-16 & 5356 & 5346 & 
Composition2-50-32 & 5356 & 5349 \\ 
Composition2-100-4 & 8356 & 8233  &
Composition2-100-16 & 8356 & 8347 \\ 
Composition2-100-32 & 8356 & 8354 & & &\\ \hline
\multicolumn{6}{c}{$N_1$ and $N_2$ represent the number of services before and after preprocessing}\\
\end{tabular}\label{tab:reduction}
\end{table}

\subsubsection{Runtime performance analysis}
\noindent
We now analyze the performance of our heuristic algorithms and show the trade-off between computation time and solution quality. Tables \ref{tab:compareAnyTimeWSC} and \ref{tab:compareAnyTimeICEBE} show the comparative results obtained by the first heuristic algorithm ($H_1$) for WSC-2009 and ICEBE-2005 datasets respectively. We gradually increased the size of the beam width of the algorithm and generated the solution. We consider three different cases depending on the size of the beam width: 100, 300 and 500 respectively. The last column of Table \ref{tab:compareAnyTimeWSC} and \ref{tab:compareAnyTimeICEBE} present the computation time required by $H_1$ to generate the solution. We compare the results obtained by the algorithm in Case $i$ with the same in Case $(i + 1)$, for $i = 1, 2$. For example, the results obtained by $H_1$ with beam width 100 is compared with the same with beam width 300. As we have already discussed in Section \ref{sec:method}, as the beam size increases, either the solution quality remains same or improves. In this comparative study, we show the degradation of the solution quality when the beam size decreases in terms of ${\cal{AD}}$, ${\cal{CR}}$ and ${\cal{CN}}$.


Finally, we compare our heuristic algorithms with \cite{yan2012anytime} which transforms the multiple objectives to a single objective to generate a single solution. In contrast, our method is able to generate the multiple non-dominated feasible solutions in a comparable time limit. Table \ref{tab:compareOurAnyTimeICEBE} 
shows the speed up achieved by both the methods with respect to \cite{yan2012anytime}. As is evident from Columns 2, 3, 5 and 6 of Table \ref{tab:compareOurAnyTimeICEBE} 
that in some cases, we have achieved speed-up more than 1 in 15 and 14 cases for $H_1$ and $H_2$ respectively out of 23 cases, which implies our method performs better than the method in \cite{yan2012anytime} in terms of computation time.

%

The Pareto optimal algorithm, being compute intensive, does not produce any result and encounters a memory out error on both the datasets, on our machine with 4GB RAM. We now show a comparative performance analysis for both the heuristic methods. 
Table \ref{tab:compareHeuristicAnyTimeICEBE} shows the comparison between both the methods for the ICEBE-2005 and WSC-2009 datasets respectively. As evident from the tables, $H_1$ and $H_2$ generated the same results in 8 cases, $H_1$ generated better results than $H_2$ in 7 cases, while $H_2$ generated better results than $H_1$ in 9 cases. In 15 cases $H_1$ achieved more speed up than $H_2$, while in 9 cases $H_2$ outperformed $H_1$ in terms of speed up. As evident from our result, both the heuristic performed well and achieved more speed-up.

\begin{table}[!ht]
\scriptsize
\caption{{\scriptsize{Comparison between our methods and \cite{yan2012anytime} for ICEBE-2005 (Rows 2-10) and WSC 2009 (Rows 11-13)}}}
\centering
\begin{tabular}{c|c|c|c}
Dataset & $S_1, S_2$ & Dataset & $S_1, S_2$\\ 
\hline
Composition1-20-4   & 0.90, 0.18 & Composition2-20-4   & 1.75, 2.54\\ 
Composition1-20-16  & 1.09, 0.13 & Composition2-20-16  & 1.48, 1.72\\ 
Composition1-20-32  & 0.78, 0.2 & Composition2-20-32  & 0.68, 0.85\\
Composition1-50-4   & 1.51, 0.52 & Composition2-50-4   & 0.70, 0.38\\
Composition1-50-16  & 0.68, 0.4 & Composition2-50-16  & 0.85, 2.83\\
Composition1-50-32  & 1.03, 0.27 & Composition2-50-32  & 7.85, 11.06\\
Composition1-100-4  & 0.57, 2.48 & Composition2-100-4  & 1.17, 0.6\\
Composition1-100-16 & 8.85, 0.9 & Composition2-100-16 & 3.8, 7.04\\ 
Composition1-100-32 & 3.55, 1.8 & Composition2-100-32 & 7.11, 13.67\\ 
\hline
WSC-01 & 0.84, 1.3 & WSC-04 & 1.68, 0.45 \\ 
WSC-02 & 7.09, 3.97 & WSC-05 & 5.57, 1.67\\
WSC-03 & 11.11, 3.88 &&\\\hline
\multicolumn{4}{l}{$S_1 = S(H_1,$\cite{yan2012anytime} $)$, $S_2 = S(H_2,$ \cite{yan2012anytime}$)$}
\end{tabular}\label{tab:compareOurAnyTimeICEBE}
\end{table}

\begin{table}[!ht]
\scriptsize
\caption{\scriptsize{Comparison between our first (with beam width 500) and second heuristic methods for ICEBE-2005 (Rows 2-20) and WSC-2009 (Rows 21-25)}}
\centering
\begin{tabular}{c|c|c|c|c|c}
Dataset             & n & ${\cal{AD}}$ & ${\cal{CR}}$ & ${\cal{CN}}$ & Speed-up\\ 
\hline
Out Composition     & (3, 4)  & 0.96     & 1    & (0.23, 0.39) & 1.86\\ 
Composition1-20-4   & (7, 2)  & 2.56     & 0.59 & (0.81, 0.69) & 4.93\\ 
Composition1-20-16  & (5, 9)  & 1.71     & 0.63 & (0.9, 0.58)  & 0.69\\ 
Composition1-20-32  & (4, 4)  & 1        & 1    & (1,1)        & 8.61\\ 
Composition1-50-4   & (9, 7)  & 0.82     & 0.67 & (0.91, 0.96) & 0.86\\ 
Composition1-50-16  & (6, 5)  & 1.92     & 0.33 & (0.68, 0.57) & 3.91\\ 
Composition1-50-32  & (2, 2)  & 1        & 1    & (1,1)        & 0.8\\ 
Composition1-100-4  & (11, 11)& 1        & 1    & (1,1)        & 2.93\\ 
Composition1-100-16 & (8, 6)  & 0.9      & 0.67 & (0.33, 0.33) & 1.85\\ 
Composition1-100-32 & (3, 3)  & 1        & 1    & (1,1)        & 1.68\\ 
Composition2-20-4   & (7, 9)  & 0.69     & 0.42 & (0.72, 0.96) & 0.3\\ 
Composition2-20-16  & (9, 6)  & 0.91     & 0.85 & (0.63, 0.76) & 3.86\\ 
Composition2-20-32  & (11,9)  & 0.68     & 0.98 & (0.61, 0.8)  & 0.71\\ 
Composition2-50-4   & (2, 5)  & 3.93     & 0.67 & (0.85, 0.35) & 0.23\\ 
Composition2-50-16  & (8, 11) & 2.91     & 0.35 & (0.96, 0.62) & 1.96\\ 
Composition2-50-32  & (3, 7)  & 0.86     & 0.85 & (0.65, 0.5)  & 9.8\\ 
Composition2-100-4  & (6, 6)  & 1        & 1    & (1,1)        & 0.54\\ 
Composition2-100-16 & (8, 6)  & 0.8      & 0.69 & (0.85, 0.92) & 1.97\\ 
Composition2-100-32 & (5, 4)  & 1.93     & 0.71 & (0.96, 0.81) & 0.52\\ 
\hline
WSC-01 & (2, 2) & 1     & 1    & (1, 1) & 0.63\\ 
WSC-02 & (2, 2) & 1     & 1    & (1, 1) & 1.79\\ 
WSC-03 & (2, 2) & 1     & 1    & (1, 1) & 2.86\\ 
WSC-04 & (5, 6) & 0.92  & 0.83 & (0.83, 1) & 3.71\\ 
WSC-05 & (6, 4) & 1.87  & 0.27 & (0.8, 0.6) & 3.33\\\hline
\multicolumn{6}{l}{~}\\
\multicolumn{6}{l}{${\cal{CR}}(Tuple_{H_1}, Tuple_{H_2})$, (${\cal{CN}}(Tuple_{H_1}, {\cal{\hat{T}}}_3), {\cal{CN}}(Tuple_{H_2},{\cal{\hat{T}}}_3))$}\\
\multicolumn{6}{l}{${\cal{AD}}(Tuple_{H_1}, Tuple_{H_2})$, $n$ is written as $(|Tuple_{H_1}|, |Tuple_{H_2}|)$, $S({H_1}, {H_2})$}
\end{tabular}\label{tab:compareHeuristicAnyTimeICEBE}
\end{table}

%

\section{Conclusion and future directions}
\label{sec:conclusion}
\noindent
This paper addresses the problem of multi-constrained multi-objective service composition in IOM based on the Pareto front construction. 
Experimental results on real benchmarks show the effectiveness of our proposal. We believe that our work will open up a lot of new research directions in the general paradigm of multi-objective service composition. Going forward, we wish to come up with a theoretical bound on the solution quality degradation of our heuristic algorithms. 

\scriptsize
\bibliographystyle{IEEEtran}
\bibliography{ref}

 \vspace{-0.6in}
\scriptsize
%

\newpage
\begin{appendix} 
\begin{lemma}
 The preprocessing step is Pareto optimal solution preserving in terms of QoS values.
 \hfill$\blacksquare$
\end{lemma}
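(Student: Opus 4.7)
The plan is to argue by contradiction: assume some Pareto optimal composition $S_i$ in terms of QoS values is lost when we replace the full service set $W$ by the clustered/skyline set $W'$. I would then show that the preprocessed search space always contains a surrogate solution $S_j$ whose QoS tuple either equals that of $S_i$ or strictly dominates it, contradicting the assumption that $S_i$ was Pareto optimal and lost.

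First I would identify where the loss can originate. The preprocessing does exactly two things: it groups services into equivalence classes (same inputs and same outputs, by Definition of equivalent services) and within each class it discards services that are strictly dominated by another class member (keeping only the skyline). Thus if $S_i$ is lost, it must contain at least one service $\mathcal{W}_k$ which was pruned, meaning there exists $\mathcal{W}_l$ in the same equivalence class with $\mathcal{W}_l$ dominating $\mathcal{W}_k$. Because $\mathcal{W}_k$ and $\mathcal{W}_l$ have identical input and output sets, substituting $\mathcal{W}_l$ for $\mathcal{W}_k$ in $S_i$ preserves all functional dependencies and yields a syntactically valid composition $S_j$ built entirely from $W'$.

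The core of the argument is then monotonicity of every aggregation function $f_x \in \mathcal{F}$ with respect to improvement of a single operand. For each QoS parameter $\mathcal{P}_x$, let $P^{*}$ denote the multiset of $\mathcal{P}_x$ values of all other services that are combined with $\mathcal{W}_k$ in $S_i$ (through the same aggregation structure). I would do a short case split on $f_x \in \{+, \times, \max, \min\}$ and on whether $\mathcal{P}_x$ is positive or negative, showing in each case that $f_x(P^{*}, \mathcal{P}^{(l)}_x)$ is at least as good as $f_x(P^{*}, \mathcal{P}^{(k)}_x)$, with strict improvement whenever $\mathcal{P}^{(l)}_x$ is strictly better than $\mathcal{P}^{(k)}_x$ and that coordinate is the one determining the aggregate (e.g.\ the maximizer for $\max$, or the minimizer for $\min$). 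Sum and product are trivially monotone, and the $\max$/$\min$ cases are the ones that require the sub-case analysis already sketched in the suppressed proof.

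Combining these coordinate-wise inequalities yields that $S_j$'s QoS tuple is componentwise at least as good as $S_i$'s. Since $\mathcal{W}_l$ dominates $\mathcal{W}_k$ there is at least one coordinate where $\mathcal{W}_l$ strictly improves; either this improvement propagates through $f_x$ (giving $S_j$ strictly dominating $S_i$ and contradicting $S_i$ being Pareto optimal) or it is absorbed by the aggregation on every coordinate where it would have propagated (giving $S_j$ with an identical QoS tuple, so $S_i$ is not actually lost up to QoS equivalence). Finally, if $S_i$ contains several pruned services, I would apply the above replacement argument iteratively, one service at a time, noting that each swap only improves or preserves the aggregate QoS and keeps the solution functionally valid. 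The main obstacle is the case analysis for $\max$ and $\min$, where one must carefully separate the sub-cases based on whether $\mathcal{P}^{(k)}_x$ and $\mathcal{P}^{(l)}_x$ are the extremal element of $P^{*} \cup \{\cdot\}$; the other aggregations are immediate from monotonicity.
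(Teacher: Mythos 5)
Your proposal is correct and follows essentially the same route as the paper's proof: a contradiction argument that replaces a pruned service $\mathcal{W}_k$ by a dominating cluster-mate $\mathcal{W}_l$, followed by a case analysis showing each aggregation function ($+$, $\times$, $\max$, $\min$) is monotone under this substitution, so the surrogate solution is at least as good. Your explicit remarks that the swap preserves functional validity (identical input/output sets) and that the argument iterates over multiple pruned services are minor refinements the paper leaves implicit.
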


 \begin{proof}
  
  We prove the above lemma by contradiction. 
  Consider a solution $S_i$ to a query ${\cal{Q}}$ is lost due to preprocessing.
  We show either of the following two conditions must hold:
  
  \begin{itemize}
   \item A solution $S_j$ to the query ${\cal{Q}}$, generated after preprocessing, dominates $S_i$.
   \item A solution $S_j$ to the query ${\cal{Q}}$, generated after preprocessing, has exactly the same QoS tuple as $S_i$.
  \end{itemize}
 
  \noindent
  We first assume that none of the above conditions hold for $S_i$.  
  Since $S_i$ is lost due to preprocessing, there must exist a solution consisting of at least one service 
  ${\cal{W}}_k$, which is removed during preprocessing. This further implies, presence of another service 
  ${\cal{W}}_l$, belonging to the same cluster as ${\cal{W}}_k$, such that ${\cal{W}}_l$ dominates ${\cal{W}}_k$. 
  We now analyze different cases for different aggregation functions for different QoS parameters. Consider 
  a QoS parameter ${\cal{P}}_x$ with aggregation function $f_x$. 
  
   If ${\cal{P}}_x$ is a positive QoS, ${\cal{P}}^{(l)}_x \ge {\cal{P}}^{(k)}_x$.
 	If ${\cal{P}}^{(l)}_x = {\cal{P}}^{(k)}_x$, $f_x(P^*, {\cal{P}}^{(l)}_x) = f_x(P^*, {\cal{P}}^{(k)}_x)$, 
 	where $P^*$ denotes the values of ${\cal{P}}_k$ for the set of services with which 
 	${\cal{P}}^{(k)}_x$ is combined in $S_i$. We, therefore, consider the cases where 
 	${\cal{P}}^{(l)}_x > {\cal{P}}^{(k)}_x$.
   \begin{itemize}
    \item $f_x$ is addition: $Sum(P^*, {\cal{P}}^{(l)}_x) > Sum(P^*, {\cal{P}}^{(k)}_x)$.
    \item $f_x$ is product: $Prod(P^*, {\cal{P}}^{(l)}_x) > Prod(P^*, {\cal{P}}^{(k)}_x)$.
    \item $f_x$ is maximum: 
 	\begin{itemize}
 	 \item If $Max(P^*, {\cal{P}}^{(k)}_x) = {\cal{P}}^{(k)}_x$, \\
 		  $Max(P^*, {\cal{P}}^{(l)}_x) > Max(P^*, {\cal{P}}^{(k)}_x)$.
 	 \item If ${\cal{P}}^{(k)}_x < Max(P^*, {\cal{P}}^{(k)}_x) < {\cal{P}}^{(l)}_x$, \\
 		  $Max(P^*, {\cal{P}}^{(l)}_x) > Max(P^*, {\cal{P}}^{(k)}_x)$.
 	 \item If ${\cal{P}}^{(k)}_x < Max(P^*, {\cal{P}}^{(k)}_x)$ and 
 		  ${\cal{P}}^{(l)}_x < Max(P^*, {\cal{P}}^{(k)}_x)$, \\
 		  $Max(P^*, {\cal{P}}^{(l)}_x) = Max(P^*, {\cal{P}}^{(k)}_x)$.
 	\end{itemize}
 	Therefore, $Max(P^*, {\cal{P}}^{(l)}_x) \ge Max(P^*, {\cal{P}}^{(k)}_x)$.
    \item $f_x$ is minimum:  
 	\begin{itemize}
 	 \item If $Min(P^*, {\cal{P}}^{(k)}_x) = {\cal{P}}^{(k)}_x$ and $Min(P^*) > {\cal{P}}^{(k)}_x$,\\
 		  $Min(P^*, {\cal{P}}^{(l)}_x) > Min(P^*, {\cal{P}}^{(k)}_x)$.
 	 \item If $Min(P^*) \le {\cal{P}}^{(k)}_x$,\\
 		  $Min(P^*, {\cal{P}}^{(l)}_x) = Min(P^*, {\cal{P}}^{(k)}_x)$.
 	\end{itemize}
 	Therefore, $Min(P^*, {\cal{P}}^{(l)}_x) \ge Min(P^*, {\cal{P}}^{(k)}_x)$.
   \end{itemize}
  Using a similar argument, it can be shown that the above lemma also holds for any
  negative QoS parameter. Therefore, the solution consisting of ${\cal{W}}_l$ is {\em{at least as good as}} the solution 
  consisting of ${\cal{W}}_k$, which contradicts our assumption.
 \end{proof}

\begin{lemma}
 Each path from $V^{(p)}_{Start}$ to $V^{(p)}_{End}$ in $G_P$ represents a solution to the query in terms 
 of functional dependencies.
 \hfill$\blacksquare$
\end{lemma}

\begin{proof}
  The proof of the lemma follows from the construction of $G_P$.
  Consider a path $\rho: V^{(p)}_{Start}-v^{(p)}_{i_1}-v^{(p)}_{i_2}-\ldots-v^{(p)}_{i_k}-V^{(p)}_{End}$ 
  of $G_P$. The services corresponding to $v^{(p)}_{i_1}$ are directly activated by the query inputs, 
  while the services corresponding to $v^{(p)}_{i_2}$ are activated by outputs of the services 
  corresponding to $v^{(p)}_{i_1}$. Similarly, the services corresponding to $v^{(p)}_{i_j}$ for 
  $j= 2, 3 \ldots, k$ are activated by the outputs of the services corresponding to $v^{(p)}_{i_{(j - 1)}}$. 
  Finally, the services corresponding to $v^{(p)}_{i_k}$ produce the query outputs. Therefore, the services 
  associated with the path $\rho$ forms a composition solution in terms of functional dependency. 
\end{proof}

\begin{lemma}
 Algorithm 4 is complete.
 \hfill$\blacksquare$
\end{lemma}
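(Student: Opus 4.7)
My plan is to argue by contradiction, assuming that some feasible, Pareto-optimal solution $S_i$ (unique up to QoS tuple) in response to query ${\cal{Q}}$ is not produced by Algorithm \ref{algo:graphConversion}, and then rule out every way this failure could happen. First I would invoke Lemma \ref{lemma:pareto} to reduce to the case where $S_i$ is composed of services from the preprocessed set $W'$, so that preprocessing can contribute no loss. Any remaining loss must then happen during the LPG construction in Algorithm \ref{algo:graphConversion}, and I would enumerate exactly three mechanisms by which $S_i$ could be dropped: (i) no path in $G_P$ structurally represents the composition underlying $S_i$; (ii) $S_i$'s QoS tuple is pruned when the set of non-dominated tuples ${\cal{P}}^{''(i)}$ at some node is formed; or (iii) $S_i$'s QoS tuple is pruned when the cumulative Pareto front ${\cal{CP}}^{(i)}$ is updated at some node.

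For Case (i), I would take the subgraph ${\cal{D}}_{Sub}$ of ${\cal{D}}$ induced by the services of $S_i$, layer it as in the description preceding the algorithm, and induct backward on layers starting from $V_{End}$. The base step is immediate since $V^{(p)}_{End}$ is explicitly inserted. For the inductive step, I would argue that when the algorithm processes a node $v^{(p)}$ corresponding to the services of layer $L_{k-j}$ of ${\cal{D}}_{Sub}$, Step \ref{step:predecessor} enumerates every combination of services in ${\cal{D}}$ that jointly activates the inputs of $v^{(p)}$, hence the combination corresponding to $L_{k-j-1}$ of ${\cal{D}}_{Sub}$ is among the constructed predecessors. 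Concatenating these yields a path in $G_P$ realising $S_i$, contradicting the Case (i) assumption.

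For Cases (ii) and (iii), the structure of the argument is the same: if a tuple $t$ corresponding to $S_i$ is discarded at some intermediate node, then some other tuple $t'$ in the same non-dominated set (or cumulative front) dominates or equals $t$. I would then lift the monotonicity argument used inside the proof of Lemma \ref{lemma:pareto}: for each aggregation function (addition, product, maximum, minimum) and each positive or negative QoS parameter, composing $t'$ with the same downstream services as $t$ yields a solution $S_j$ whose QoS tuple is at least as good as that of $S_i$ in every coordinate. Either $S_j$ and $S_i$ share the same tuple, so $S_i$ is not unique in the sense required; or $S_j$ dominates $S_i$, contradicting the Pareto-optimality of $S_i$. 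Because every global-constraint check in Steps \ref{step:constraintCheckStart}--\ref{step:constraintCheckEnd} applies identically to $t$ and to any dominating $t'$, feasibility transfers from $S_i$ to $S_j$ automatically.

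The main obstacle I expect is Case (i), specifically the bookkeeping that shows the predecessor-enumeration in Step \ref{step:predecessor}, after the ``superset'' pruning justified by the redundant-service assumption, still retains the exact combination induced by ${\cal{D}}_{Sub}$. I would handle this by noting that the minimal combinations kept by the algorithm are precisely the ones with no redundant service, and that ${\cal{D}}_{Sub}$ being part of a Pareto-optimal solution cannot itself contain redundant services (otherwise removing them would yield a dominating solution under the stated monotonicity assumption), so the layered combination of ${\cal{D}}_{Sub}$ survives the pruning. Combining the three case refutations yields the desired contradiction and establishes completeness.
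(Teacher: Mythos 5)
Your proposal is correct and follows essentially the same route as the paper's own proof: a contradiction argument split into the same three cases (no structural path in $G_P$; pruning at a node's non-dominated set; pruning at the cumulative Pareto front), with Case 1 handled by the same backward layer-by-layer induction on ${\cal{D}}_{Sub}$ and Cases 2--3 by lifting the aggregation-function monotonicity from Lemma~\ref{lemma:pareto}. If anything, you are slightly more careful than the paper, which silently skips the two points you flag explicitly --- that the superset/redundant-combination pruning cannot discard the combination induced by a Pareto-optimal $S_i$, and that feasibility under the worst-case global constraints transfers from a dominated tuple to a dominating one.
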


 \begin{proof}
  In order to prove the completeness of Algorithm 4, we show Algorithm 
  4 generates all distinct feasible solutions in terms of QoS values belonging to 
  the Pareto front in response to a query ${\cal{Q}}$. We prove the above lemma by contradiction. We first assume 
  that a unique (in terms of QoS values) feasible solution $S_i$ belonging to the Pareto front is not generated 
  by Algorithm 4. This implies, one of the following condition holds:
  \begin{itemize}
   \item Case 1: $S_i$ cannot be generated through any path of $G_P$, even if no tuple or node is removed during 
 	solution construction.
   \item Case 2: $S_i$ is removed while constructing the set of non dominated tuples corresponding to a node in $G_P$.
   \item Case 3: $S_i$ is removed while constructing the cumulative Pareto optimal solution till a node $v^{(p)}_i \in G_P$.
  \end{itemize}
  Since we have already proved in Lemma \ref{lemma:pareto} that the preprocessing step is Pareto optimal solution 
  preserving in terms of QoS values, we can now assume that $S_i$ consists of the services
  in $W'$. We now analyze each case separately below.
  
   Consider Case 1. Below, we prove that Case 1 cannot be true by the following argument. 
 	    While constructing the set of predecessor nodes of a node $v^{(p)}_i \in G_P$, we consider all 
 	    possible combinations of nodes in ${\cal{D}}$ which can activate the nodes
 	    corresponding to $v^{(p)}_i$. 	    
 	    Consider the subgraph ${\cal{D}}_{Sub}$ of ${\cal{D}}$ corresponding to the solution $S_i$ and the
 	    partition ${\cal{D}}_{Sub}$ into multiple layers as defined by Equations 1
 	    and 2. The last layer (say $L_k$) of ${\cal{D}}_{Sub}$ consists of $V_{End} \in {\cal{D}}$. 
 	    Consider each layer $L_i$, for $i = 0, 1, \ldots, k$ in ${\cal{D}}_{Sub}$ consists of a set of nodes 
 	    $V_i \subset V$. It may be noted, the set of services corresponding to $V_{(k-1)}$ 
 	    in $L_{(k - 1)}$ produce the query outputs. $V^{(p)}_{End} \in G_P$ consists of $V_{End} \in {\cal{D}}$. 
 	    While constructing the set of predecessor nodes of $V^{(p)}_{End}$, since all possible combinations of nodes 
 	    in ${\cal{D}}$ which can activate $V_{End}$ are considered, $V_{(k-1)}$ is also considered. 
 	    Therefore, a node $v^{(p)}_{i_1} \in G_P$ must correspond to $V_{(k-1)}$. 
 	    In general, $v^{(p)}_{i_j} \in G_P$ corresponds to $V_{(k-j)}$. 
 	    Since all possible combinations of nodes in ${\cal{D}}$ which can activate $V_{(k-j)}$
 	    are considered while constructing the set of predecessor nodes of $V_{(k-j)}$,
 	    $V_{(k - j - 1)}$ is also considered. Hence, a path corresponding to ${\cal{D}}_{Sub}$ belongs to $G_P$, which 
 	    contradicts our assumption.
 	    
   We now consider Case 2. Consider $v^{(p)}_i \in G_P$ consists of $\{v_{i_1}, v_{i_2}, \ldots, v_{i_k}\} \in V$ of ${\cal{D}}$.
 	    The services corresponding to $\{v_{i_1}, v_{i_2}, \ldots, v_{i_k}\}$ are executed in parallel. It may be 
 	    noted, each service corresponding to $v_{i_j} \in \{v_{i_1}, v_{i_2}, \ldots, v_{i_k}\}$ may be associated with 
 	    more than one QoS tuple. Therefore, after composition, more than one tuple is generated. We consider only 
 	    the set of non dominated tuples {\small{${\cal{TP}}^*$}} from the set of generated tuples {\small{${\cal{TP}}$}}. If 
 	    $S_i$ is removed due to removal of {\small{$({\cal{TP}} \setminus {\cal{TP}}^*)$}}, this implies $S_i$ 
 	    consists of one QoS tuple {\small{$t \in ({\cal{TP}} \setminus {\cal{TP}}^*)$}}. This further implies, 
 	    another tuple $t' \in$ {\small{${\cal{TP}}^*$}} must dominate $t$. By a similar argument, as in the proof of 
 	    Lemma \ref{lemma:pareto}, we can say another solution $S_j$ generated by Algorithm 4
 	    either has exactly the same QoS tuple as $S_i$, which implies $S_i$ is not unique, or $S_j$ dominates $S_i$, which
 	    contradicts our assumption. 	    
 	    The argument of Case 3 is same as in Case 2.
  Therefore, none of the above cases hold, which contradicts our assumption. Thus Algorithm 4 is 
  complete.
\end{proof}

\begin{lemma}
 Algorithm 4 is sound.
 \hfill$\blacksquare$
\end{lemma}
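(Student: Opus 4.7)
The plan is to establish soundness by showing two properties of every solution $S_i$ returned by Algorithm \ref{algo:graphConversion}: (i) $S_i$ is feasible, and (ii) $S_i$ lies on the Pareto front. The first property will follow directly from inspection of the algorithm, while the second will be argued by contradiction, leveraging the completeness result (Lemma \ref{lemma:complete}) already established.

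For feasibility, I would point to Steps \ref{step:constraintCheckStart}--\ref{step:constraintCheckEnd} of Algorithm \ref{algo:graphConversion}, where every cumulative Pareto tuple is validated against the set of global constraints and discarded if any constraint is violated; similarly, whenever a new node is constructed, tuples violating global constraints are removed and nodes with empty tuple sets are not inserted into the queue. Combined with the local-constraint filtering performed during dependency graph construction (where any service tuple violating a local constraint is removed from ${\cal{P}}^{'(i)}$, and nodes with empty tuple sets are eliminated outright), no infeasible tuple can reach $V^{(p)}_{Start}$. Hence, any solution produced by the algorithm satisfies both the local and global constraints, which is the definition of feasibility.

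For Pareto optimality, I would proceed by contradiction. Suppose some generated $S_i$ is not on the Pareto front. Then there exists another feasible solution $S_j \ne S_i$ with QoS tuple that dominates the QoS tuple of $S_i$. By Lemma \ref{lemma:complete}, every distinct feasible solution on the Pareto front is generated by the algorithm, so $S_j$ is either itself generated, or there exists a generated solution whose tuple equals or dominates $S_j$'s. In either case, a tuple strictly better than $S_i$'s is produced at $V^{(p)}_{Start}$. But the algorithm, at every intermediate node, retains only the non-dominated tuples of the current cumulative Pareto set. Consequently, $S_i$'s tuple would have been dominated and pruned before reaching $V^{(p)}_{Start}$, contradicting the assumption that $S_i$ was returned.

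The main obstacle I anticipate is making the contradiction step fully rigorous: I must argue that dominance at the global level is actually detected during the incremental cumulative-Pareto updates, i.e., that whenever a sub-solution of $S_j$ dominates the corresponding sub-solution of $S_i$ at some intermediate node, the dominance is preserved by the subsequent aggregations all the way up to $V^{(p)}_{Start}$. This is precisely the monotonicity of QoS aggregation functions with respect to dominance, and it is exactly the case analysis over $\mathrm{Sum}$, $\mathrm{Prod}$, $\mathrm{Max}$, $\mathrm{Min}$ used in the proof of Lemma \ref{lemma:pareto}. I would therefore invoke that case analysis to conclude that the aggregated tuple produced by $S_j$ at $V^{(p)}_{Start}$ dominates the aggregated tuple produced by $S_i$, closing the contradiction and establishing soundness.
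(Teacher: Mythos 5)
Your proposal is correct and follows essentially the same route as the paper's proof: feasibility is read off from the global-constraint filtering in Steps \ref{step:constraintCheckStart}--\ref{step:constraintCheckEnd} (plus the local-constraint checks during dependency-graph construction), and Pareto-front membership is obtained by contradiction via completeness (Lemma \ref{lemma:complete}) --- if a dominating $S_j$ existed it would also be generated, so $S_i$'s tuple would be pruned from the cumulative Pareto set. Your additional appeal to the monotonicity of the aggregation functions (the case analysis from Lemma \ref{lemma:pareto}) is a sensible strengthening that the paper's own proof omits, but it does not change the structure of the argument.
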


 \begin{proof}
  In order to prove the soundness of Algorithm 4, we need to show each solution $S_i$,
  generated by Algorithm 4 in response to a query ${\cal{Q}}$, is a feasible solution 
  and belongs to the Pareto front.  
  If $S_i$ is generated by Algorithm 4, it must be a feasible solution, 
  as it is ensured by Steps 18-20 
  of Algorithm 4. We now prove by contradiction that $S_i$ belongs to the Pareto front.
  We first assume $S_i$ does not belong to the Pareto front. This implies, there exists another solution $S_j$ 
  which dominates $S_i$. However, being complete, Algorithm 4 generates all the feasible
  solutions belonging to the Pareto front. Therefore, Algorithm 4 generates $S_j$ as  
  one of the solutions. Therefore, $S_i$ cannot be generated by Algorithm 4, since $S_j$
  dominates $S_i$, which contradicts our assumption. Thus, Algorithm 4 is sound.
 \end{proof}
 
 \begin{lemma}
The solution quality of the heuristic algorithm monotonically improves with increase in beam width.
 \hfill$\blacksquare$
\end{lemma}

\begin{proof}
 The heuristic algorithm uses Algorithm 4 to generate the solutions from the reduced search space. Therefore, the solutions generated by this algorithm is the feasible Pareto optimal solutions obtained from the reduced search space according to Lemma \ref{lemma:complete} and \ref{lemma:sound}. Moreover the search space required by the heuristic algorithm with beam width equals to $k$ subsumes the search space required by the heuristic algorithm with beam width equals to $k_1$, where $k > k_1$. Therefore, with the increase in beam width, the solution quality of the heuristic algorithm monotonically improves.
\end{proof}
 
\end{appendix}

\end{document}